\def\mPhi{{\bm{\Phi}}}
\def\mDelta{{\bm{\Delta}}}
\def\mA{{\bm{A}}}
\def\mB{{\bm{B}}}
\def\mD{{\bm{D}}}
\def\mF{{\bm{F}}}
\def\mG{{\bm{G}}}
\def\mI{{\bm{I}}}
\def\mK{{\bm{K}}}
\def\mM{{\bm{M}}}
\def\mN{{\bm{N}}}
\def\mP{{\bm{P}}}
\def\mT{{\bm{T}}}
\def\mU{{\bm{U}}}
\def\mQ{{\bm{Q}}}
\def\mW{{\bm{W}}}
\def\mX{{\bm{X}}}
\def\mY{{\bm{Y}}}
\def\mZ{{\bm{Z}}}
\def\va{{\bm{a}}}
\def\vd{{\bm{d}}}
\def\vq{{\bm{q}}}
\def\vv{{\bm{v}}}
\def\vu{{\bm{u}}}
\def\vw{{\bm{w}}}
\def\vx{{\bm{x}}}
\def\vy{{\bm{y}}}
\def\vz{{\bm{z}}}
\def\vmu{{\bm{\mu}}}
\def\vphi{{\bm{\phi}}}
\def\vzero{{\bm{0}}}
\def\vone{{\bm{1}}}
\def\sC{{\mathbb{C}}}
\def\sR{{\mathbb{R}}}
\DeclareMathOperator{\Tr}{Tr}
\newcommand{\Var}{\mathrm{Var}}
\newcommand{\eqnref}[1]{(\ref{#1})}
\newcommand{\gso}{\tilde{\mA}}
\newtheorem{assumption}{Assumption}
\newtheorem{remark}{Remark}
\newtheorem{theorem}{Theorem}
\newtheorem{corollary}[theorem]{Corollary}
\newtheorem{lemma}[theorem]{Lemma}
\begin{document}

%

%
\runningauthor{Seddik, Wu, Lutzeyer \& Vazirgiannis}

\twocolumn[

\aistatstitle{Node Feature Kernels Increase Graph Convolutional Network Robustness}

\aistatsauthor{ Mohamed El Amine Seddik\\ Huawei, Paris, France
    \And Changmin Wu\\ LIX, École Polytechnique, \\Institute Polytechnique de Paris, France 
    \AND  Johannes F. Lutzeyer\\ LIX, École Polytechnique, \\Institute Polytechnique de Paris, France 
    \And Michalis Vazirgiannis\\LIX, École Polytechnique, \\Institute Polytechnique de Paris, France;\\AUEB, Athens, Greece}
\aistatsaddress{}
]

\begin{abstract}
The robustness of the much used Graph Convolutional Networks (GCNs) to perturbations of their input is becoming a topic of increasing importance. In this paper the \textit{random} GCN is introduced for which a random matrix theory analysis is possible. This analysis suggests that if the graph is sufficiently perturbed, or in the extreme case random, then the GCN fails to benefit from the node features. It is furthermore observed that enhancing the message passing step in GCNs by adding the node feature kernel to the adjacency matrix of the graph structure solves this problem. An empirical study of a GCN utilised for node classification on six real datasets further confirms the theoretical findings and demonstrates that perturbations of the graph structure can result in GCNs performing significantly worse than Multi-Layer Perceptrons run on the node features alone. In practice, adding a node feature kernel to the message passing of perturbed graphs results in a significant improvement of the GCN's performance, thereby rendering it more robust to graph perturbations. Our code is publicly available at: \href{https://github.com/ChangminWu/RobustGCN}{https://github.com/ChangminWu/RobustGCN}. 
\end{abstract}

\section{INTRODUCTION}
In recent years Graph Neural Networks (GNNs) have been a highly impactful model type for the analysis of graph data. This is mainly due to their dominating empirical performance and ability to process attributed graphs composed of node information and an underlying graph structure. Many GNN architectures have been proposed, successively improving on weaknesses of previous architectures (e.g. \citet{corso2020principal, hamilton2017inductive, Xu2019}). 
A popular GNN architecture which has remained a benchmark throughout the past years, partly due to the simplicity of its model equation and partly due to its good performance is the Graph Convolutional Network (GCN) \citep{Kipf2017}. The GCN is part of a class of GNNs called message passing neural networks \citep{Gilmer2017}, where the computations are split into a message passing step in which node features are aggregated over neighbourhoods in the underlying graph structure and an update step in which node features are processed, most commonly by a Multi-Layer Perceptron (MLP). 

While much work is being done in the empirical exploration of GNNs, 
relatively fewer advances have been made in their theoretical analysis. A major advance in the theoretical line of research was the expressivity analysis of different message passing operators performed by \citet{Xu2019} and \citet{Morris2019}. This analyses inspired many researchers to further investigate the expressivity of GNNs and resulted in a multitude of new architectures being proposed \citep{Maron2019, Dasoulas2020}. Another upcoming topic in the development of GNNs is their robustness to perturbations of the underlying graph structure \citep{Zuegner2019, Sun2020}. 

In the presented work, we introduce the \textit{random} GCN, in which parameters of the update step are randomly sampled from Gaussian distributions rather than trained as is commonly the case. The \textit{random} GCN allows us to make use of several powerful random matrix theory tools to gain a theoretical understanding of the factors driving the inference obtained from the GCN model. Our most insightful hypothesis obtained in this way is that \textit{the message passing step dilutes (or in the extreme case completely ignores) information present in the node features if the underlying graph structure is noisy (or in the extreme case completely random)}. In our theoretical analysis we observe that if information of the node features is introduced to the message passing operation, then this loss of information is avoided. This leads us to hypothesise that the addition of the node feature kernel to the message passing operators in GNNs could render them more robust to noise or mispecification of the underlying graph structure. 

In a second part of our presented work we test the hypotheses, obtained in our study of the  \textit{random} GCN, on the state-of-the-art GCN architecture applied to six real-world benchmark datasets. This allows us to empirically verify our theoretical insight, rendering the random features approach for theoretical analysis a promising avenue for further theoretical study of GNN architectures, and the inclusion of node feature information in the message passing step a valid method to increase the robustness of GNNs. 

Our main findings may be summarised as: \textbf{(i)} We contribute both a theoretical and an empirical understanding of how graph and node feature information is processed by the GCN, and \textbf{(ii)} importantly find that the preservation of node feature information is entirely dependent on an informative underlying graph structure. \textbf{(iii)} We furthermore, propose a novel GCN message passing scheme which results in more robust inference from a GCN to structural noise.  

The remainder of this paper is organised as follows. In Section \ref{sec:related_work} we introduce related literature. In Section \ref{sec:theory} we propose the  \textit{random} GCN and analyse it using tools from random matrix theory. The theoretical insight from Section \ref{sec:theory} is then empirically verified in Section \ref{sec:experiments}, where we confirm our hypotheses on the standard GCN on six benchmark datasets and observe the robust performance of the GCN when the node feature kernel matrix is added in the message passing step.

\section{RELATED WORK} \label{sec:related_work}

There exists an extensive literature branch which studies \textit{adversarial attack and defence strategies on graph data} in the context of GNNs summarised in \citet{Guennemann2022},  \citet{Sun2020} and \citet{Zhou2020} with the latter pointing out directly the need for the development of more robust GNNs. In this paper we present one approach to robustifying the performance of GNNs to graph perturbations. In this literature the focus often lies on specific attack strategies perturbing the graph structure in order to alter the inference obtained from a GNN, most commonly the GCN, and defence strategies which aim to develop methodology which is robust to these attacks. 
Recent advances in this literature include, \citet{Zuegner2019} proposing a meta learning approach to find optimal graph perturbations. Their perturbation mechanism is found to drastically decrease the global performance of GNNs to be in some cases worse than simple benchmarks such as logistic regression run on the node features only. \citet{Zuegner2020} propose an algorithm which certifies robustness of individual nodes for the GCN used for node classification under perturbations of the graph structure.
In \citet{Geisler2020} and \citet{Jin2021} the message passing operator in the GCN is replaced by the Soft Mediod function and the sum of several distance based adjacency matrices, respectively, with the aim of more robust GCN performance. 
\citet{Jin2020} propose to learn the graph structure jointly with the GNN parameters to robustify performance and also \citet{Entezari2020} propose to alter the graph structure by using a low rank approximation of the adjacency matrix. The works of \citet{Zhu2019} and \citet{Zhang2020} are most closely related to our proposition of using a node feature kernel to reweight edges in Section \ref{sec:kernel-theory} as they both propose to reweight edges based on the node features. Our theoretical findings in Section \ref{sec:theory} support this approach of more directly taking the node features into account in the aggregation scheme of GNNs to increasing their robustness.






This paper distinguishes itself from adversarial attacks and defence literature fundamentally in that we study untargeted, random graph perturbations which arise as a result of mispecification of the data or uncertainty in the recording methods of the networks. For this kind of perturbation we are able to provide both theoretical (on a toy data example) and empirical understanding, which enables us to offer a distinction between the node feature data and the graph data in networks data sets and how these different information sources are processed by a GNN architecture. 

Our work is also related to the literature studying the challenges that heterophilic graphs pose for GNNs \citep{Pei2020,Zhu2020,Zhu2021}. This literature distinguishes homophilic and heterophilic graphs, in which edges in the graph predominantly connect nodes of equal and unequal classes, respectively. Both of these structures can be, from a theoretical standpoint, equally class-informative, it is only the structure of the class-information which varies. In our work here we consider an orthogonal problem, which is the situation of a diminishing class-structure in the graph, independent of its homo- or heterophilic nature, and the effect this diminishment has on the ability of GNNs to process the information contained in the node features.  


\section{ANALYSIS OF THE RANDOM GCN} \label{sec:theory}

In this section we present our theoretical analysis and main findings. 
Throughout this section $\Vert \cdot \Vert$ denotes the Euclidean (resp., spectral) norm for vectors (resp., matrices); $\Vert \cdot \Vert_F$ denotes the Frobenius norm.
Specifically, we consider a \textit{random} GCN model\footnote{In Section \ref{sec:result}, we show that the performance of the large \emph{random} GCN matches that of the vanilla GCN.}, defined as
\begin{align}\label{eq:model}
    \mPhi = \sigma(\gso \mX \mW),
\end{align}
where $\gso\in \sR^{n\times n}$ denotes the normalised adjacency operator encoding the graph structure (see \eqnref{eq:laplacian} for its  definition), $\mX\in \sR^{n\times p}$ corresponds to the node features, $\mW\in \sR^{p\times d}$ is a random matrix with $W_{ij} \sim \mathcal{N}(0, 1)$ independent and identically distributed (i.i.d.) and $\sigma$ is an activation function applied entry-wise. 
 In particular, we will 
study the spectral behaviour of the \textit{Gram matrix}\footnote{$\mG$ provides access to the internal functioning and performance evaluation of the random GCN.} defined as
\begin{align}\label{eq:Gram}
    \mG = \frac{1}{d} \mPhi \mPhi^\intercal = \frac{1}{d} \sigma(\gso \mX \mW ) \sigma(\mW^\intercal \mX^\intercal \gso^\intercal  ).
\end{align}
To analyse $\mG$ we require assumptions on the node features and graph structure.
\begin{assumption}[Node features]\label{ass:GMM} We suppose that $\mX^\intercal = [\vx_1, \ldots, \vx_n]\in \sR^{p\times n},$ where $\vx_1, \ldots, \vx_n$ are independent node feature vectors, each being a sample from one of $k=2$ distribution classes $\mathcal{C}_1$ and $\mathcal{C}_2$. We further assume that the node feature vectors $\vx_i$ follow a Gaussian mixture model; Specifically, for $\vx_i \in \mathcal{C}_a$, $\vx_i = (-1)^a\frac{\vmu}{\sqrt p} + \vz_i$ for some vector $\vmu \in \sR^p $ and $\vz_i\sim \mathcal{N}(\vzero, \mI_p/p)$.
\end{assumption}
We stress that Assumption \ref{ass:GMM} can be relaxed to a larger class of random vectors $\vx\in \mathcal{X},$ where $\mathcal{X}$ denotes any normed space, satisfying the concentration property $\mathbb P ( \vert \varphi(\vx) - \mathbb E [\varphi(\vx)] \vert > t) \leq C e^{-(t/\sigma)^q}$ with $q \in \sR^+$, for all $1$-Lipschitz functions $\varphi:\mathcal{X}\to \sR$. Such vectors are called \textit{random concentrated vectors} and have the particular property to be stable by Lipschitz transformations \citep{louart2018concentration}. 
The simplest example of concentrated vectors is the standard Gaussian vector $\vz \sim \mathcal{N}(\vzero, \mI_p)$ \citep{LED05}. A more complicated class of examples arises from the fact that the concentration property is stable through Lipschitz maps: if $\vz\in \sR^d$ is concentrated and $g:\sR^d\to \sR^p$ is $1$-Lipschitz, then $g(\vz)$ is also concentrated. A large family of \textit{generative models} falls under this more complicated class of examples, such as, the ``fake'' images generated by Generative Adversarial Networks due to these images being  constructed as Lipschitz transformations of random Gaussian vectors \citep{seddik2020random}.

Now we introduce the underlying model that defines the graph structure. 
We assume that the adjacency matrix $\mA$ of the graph is generated by a stochastic block model \citep{karrer2011stochastic}.
\begin{assumption}[Graph structure] \label{ass:SBM}We assume that the entries of $\mA$ are independent (except for $A_{ii}=1$ for all $i$) Bernoulli random variables with parameter $\pi_{ij} = q^2C_{ab} \in (0, 1)$ for $\vx_i\in \mathcal{C}_a$ and $\vx_j \in \mathcal{C}_b$. In particular, $q\in (0, 1)$ represents the probability 
of an edge occurring between two nodes, 
while $C_{ab}$ represents the probability of an edge arising between 
nodes in classes $\mathcal{C}_a$ and $\mathcal{C}_b.$ 
\end{assumption}

Note that self-loops are implicitly added in Assumption 2, where we assume $A_{ii}=1$ for all $i.$ 
Therefore, we consider that the normalised adjacency operator in \eqnref{eq:model} is defined as
\begin{align}\label{eq:laplacian}
    \gso = \frac{1}{\sqrt n} \left( \mA - \vq \vq^\intercal \right),
\end{align}
where $\vq = q \vone_n$\footnote{The vectors $\vq$ can be consistently estimated through the degree vector $\vd = \mA \vone_n$ as $\vq \approx \vd / \sqrt{\vd^\intercal \vone_n}$.}. The centring by $\vq \vq^\intercal$ is necessary for the eigenvectors corresponding to the extremal eigenvalues of the operator to be class informative \citep{Li2018}, i.e., the centering operation removes the uninformative eigenvector corresponding to the largest eigenvalue of the adjacency matrix, simplifying the theoretical analysis. Specifically, for our analysis in the asymptotic regime where $n\to \infty$ (see Assumption \ref{ass:Growth_rate} subsequently), the centring with $\vq \vq^\intercal$ and the normalisation by $\frac{1}{\sqrt{n}}$ are required so that $\gso$ has a bounded spectral norm asymptotically. In practice, the centring by $\vq \vq^\intercal$ is not feasible as it results in a dense matrix. In our experiments in Section \ref{sec:experiments}, we see this discrepancy to be of little consequence in practice. 


\begin{remark}
Assumption \ref{ass:SBM} allows us to sample directed as well as undirected graphs. Often the spectral analysis of graphs needs to be restricted to undirected graphs, since the analysis of complex-valued spectra arising for directed graphs poses a significant challenge. We are able to include directed graphs since the Gram matrix, analysed in Section \ref{subsec:gram}, and $\tilde\mX \tilde\mX^\intercal,$ analysed in Section \ref{subsec:yyt}, have real spectra even if the underlying graph structure is directed.
\end{remark}

\subsection{Spectral Behaviour of the Gram Matrix} \label{subsec:gram}
Let $\tilde \mX^\intercal = \left[\tilde\vx_1, \ldots, \tilde\vx_n \right] = \mX^\intercal \gso \in \sR^{p\times n}$, the entries of the Gram matrix defined in \eqnref{eq:Gram} are given by
\begin{align*}
    G_{ij} = \frac{1}{d} \sigma(\mW^\intercal \tilde \vx_i)^\intercal \sigma(\mW^\intercal \tilde \vx_j) = \frac{1}{d} \sum_{\ell = 1}^d \sigma(\vw_\ell^\intercal \tilde \vx_i)  \sigma(\vw_\ell^\intercal \tilde \vx_j),
\end{align*}
where $\vw_\ell^\intercal$ denotes the $\ell$-th row of $\mW^\intercal.$ Since all the $\vw_\ell$ follow the same distribution $\mathcal{N}(\vzero, \mI_p)$, taking the expectation over $\vw \sim \mathcal{N}(\vzero, \mI_p)$ (conditionally on $\mX$ and $\mA$) yields the average Gram matrix $\bar \mG$ defined with entries 
\begin{align}
    \bar \mG_{ij} = \mathbb E_{\vw|\mX, \mA}\left[ \sigma(\vw^\intercal \tilde \vx_i)  \sigma(\vw^\intercal \tilde \vx_j)\right].
\end{align}
In particular, in the large $n,p,d$ limit, it has been shown in \citep{louart2018random} that the spectrum (and largest eigenvectors) of $\mG$ are fully described by $\bar \mG$. Specifically, the \textit{resolvent} of $\mG$ defined as, 
\begin{align}
    \mQ(z) = \left( \mG + z \mI_n \right)^{-1},
\end{align}
for $z\in \sC_+$ (with $\Im(z)>0$), has a \textit{deterministic equivalent}\footnote{Such a deterministic equivalent is a standard object within random matrix theory \citep{hachem2007deterministic} since it allows us to characterise the behaviour of the eigenvalues of $\mG$ as well as its largest (often informative) eigenvectors. Specifically, the \textit{spectral measure} $\mu_n = \frac1n \sum_{i=1}^n\delta_{\lambda_i(\mG)}$ of $\mG$ (where $\lambda_i(\mG)$ denotes the $i^{\mathrm{th}}$ eigenvalue of $\mG$) is related to $\mQ(z)$ through the \textit{Stieltjes transform} $q_n(z)=\int(t-z)^{-1}\mu_n(dt) = \frac1n \Tr( \mQ(-z))$. While the eigenvector $\hat\vu_i\in \sR^n$ corresponding to eigenvalue $\lambda_i(\mG)$ is related to $\mQ(z)$ through the Cauchy-integral $\hat{\vu}_i\hat{\vu}_i^\intercal = \frac{-1}{2\pi i} \oint_{\Gamma_i} \mQ(-z) dz$ where $\Gamma_i$ is a positively oriented complex contour surrounding $\lambda_i(\mG)$.} $\bar \mQ(z)$ (conditionally on $\mX$ and $\mA$). 
In other words, for all $\mM \in \sR^{n\times n}$ and $\vu, \vv\in \sR^n$ of bounded spectral and Euclidean norms, respectively, with probability one, $\frac1n \Tr \left( \mM(\mQ(z) - \bar \mQ(z))\right) \to 0, \quad \vu^\intercal (\mQ(z) - \bar \mQ(z)) \vv \to 0,$ which we will simply express using the notation $\mQ(z) \leftrightarrow \bar \mQ(z)$.\par 
A large dimensional growth rate assumption provides the existence of $\bar \mQ(z).$ 
\begin{assumption}[Growth rate]\label{ass:Growth_rate} As $n\to \infty$, \textbf{1.} $p/n\to c\in (0, \infty)$ and $d/n\to r\in (0, \infty)$; \textbf{2.} $\limsup_n \Vert \tilde \mX \Vert < \infty$\footnote{This assumption holds if additional assumptions on the node feature mean vector $\vmu$ and the graph parameters $C_{ab}$, which shall be provided Assumption \ref{ass:conditions}, are placed.} and $\vert \mathcal{C}_a \vert/n\to c_a\in (0, 1)$; \textbf{3.} $\sigma$ is $\lambda_\sigma$-Lipschitz continuous with $\lambda_\sigma>0$ constant.
\end{assumption}
Under Assumption \ref{ass:Growth_rate}, we have from \citep{louart2018random}
\begin{align}\label{eq:Q_bar}
    \mQ(z) \leftrightarrow \bar \mQ(z) = \left( \frac{\bar \mG}{1+\delta_g(z)} + z \mI_n \right)^{-1},
\end{align}
where $\delta_g(z)$ is the unique positive solution to the fixed point equation $\delta_g(z) = \frac1n \Tr (\bar \mG \bar \mQ(z))$.\par
From \eqnref{eq:Q_bar}, to describe the behaviour of $\mG$ one needs to address the non-linearity $\sigma$ in the matrix $\bar \mG$, this is achieved by approximating $\bar \mG$ by a more tractable form in the large $n$ limit. An additional regularity condition on $\sigma$ is needed which we formulate now. 
\begin{assumption}[Regularity of $\sigma$]\label{ass:regularity} Suppose that $\sigma$ is twice differentiable with $\limsup_{n,x\in \sR} \vert \sigma''(x) \vert < \infty$. Furthermore, for $\xi\sim \mathcal{N}(0, 1)$ suppose $\mathbb E [\sigma(\xi)]=0$ and $\mathbb E [\sigma^2(\xi)]=1$.
\end{assumption}
Denote the quantity $b_\sigma = \mathbb E[\sigma'(\xi)]$. Under Assumptions \ref{ass:Growth_rate}-\ref{ass:regularity}, from \citep[Lemma F.1]{fan2020spectra}, the average Gram matrix $\bar \mG$ can be approximated by the $n\times n$ matrix $\tilde{\mG} = b_\sigma^2 \tilde \mX \tilde \mX^\intercal + (1-b_\sigma^2) \mI_n$, since almost surely as $n\to \infty$
\begin{align}\label{eq:approx}
    \frac1n \Vert \bar \mG - \tilde{\mG} \Vert^2_F \to 0.
\end{align}
This approximation ensures in particular that $\bar \mG$ and $\tilde{\mG}$ share the same spectrum.
\begin{remark} The approximation of  $\bar\mG$ by $\tilde{\mG}$ in \eqnref{eq:approx} is valid when the matrix $\tilde \mX \tilde \mX^\intercal$ is of bounded spectral norm. This will be ensured in Assumption \ref{ass:conditions} where additional assumptions are placed on our model parameters $\vmu$ and $C_{ab}$. Furthermore, since the node features $\vx_i$ follow a Gaussian mixture model (as per Assumption \ref{ass:GMM}), if $\gso$ has a bounded spectral norm, then the matrix $\tilde \mX$ falls under the setting of \citep{fan2020spectra} in which the relation in \eqnref{eq:approx} holds.
\end{remark}
Since the behaviour of the average Gram matrix $\bar \mG$ reduces to the analysis of the spectral behaviour of the matrix $\tilde\mX \tilde\mX^\intercal$ as per the approximation in \eqnref{eq:approx}, we will analyse $\tilde\mX \tilde\mX^\intercal$ for the remainder of Section \ref{sec:theory}.

\subsection{Spectral Behaviour of $\tilde X \tilde X^\intercal$} 
\label{subsec:yyt}
We first need further controls on the quantities $\vmu$ and $C_{ab}$ as we describe in the following assumption.
\begin{assumption}\label{ass:conditions} As $n\to \infty$, \textbf{1.} $\limsup_n \Vert \vmu \Vert < \infty$; \textbf{2.} $C_{aa}=1 + \frac{\eta_a}{\sqrt n}$ for $a\in \{1, 2\}$ and $C_{ab}=1$ for $a\neq b \in \{1, 2\}$, where $\eta_a = (-1)^a\eta$ and $\limsup_n \eta < \infty$. 
\end{assumption}
\begin{remark}
Assumption \ref{ass:conditions}.2 defines a dense graph such that the clustering with spectral methods is not asymptotically trivial. Real-World graphs are usually sparse and fall within our theoretical analysis by considering the entry-wise multiplication of the adjacency matrix $\mA$ by a random binary mask as is done by \citet{zarrouk2020performance}. Furthermore without loss of generality, we have specified $\eta_a = (-1)^a\eta$ for clarity of exposition of our theoretical results (Theorem \ref{thm:main}), which can be generalised to different choices of the inter-class similarities (choices of $\eta_a$).
\end{remark}
\begin{figure*}[t!]
\centering
\includegraphics[width=.305\linewidth]{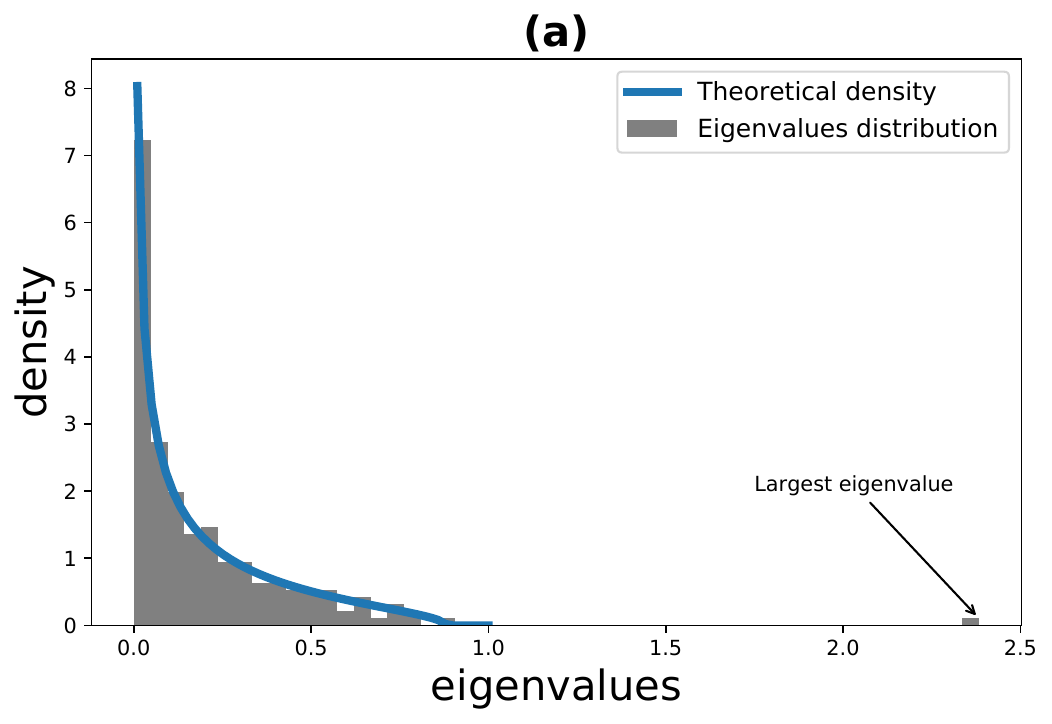}
\includegraphics[width=.32\linewidth]{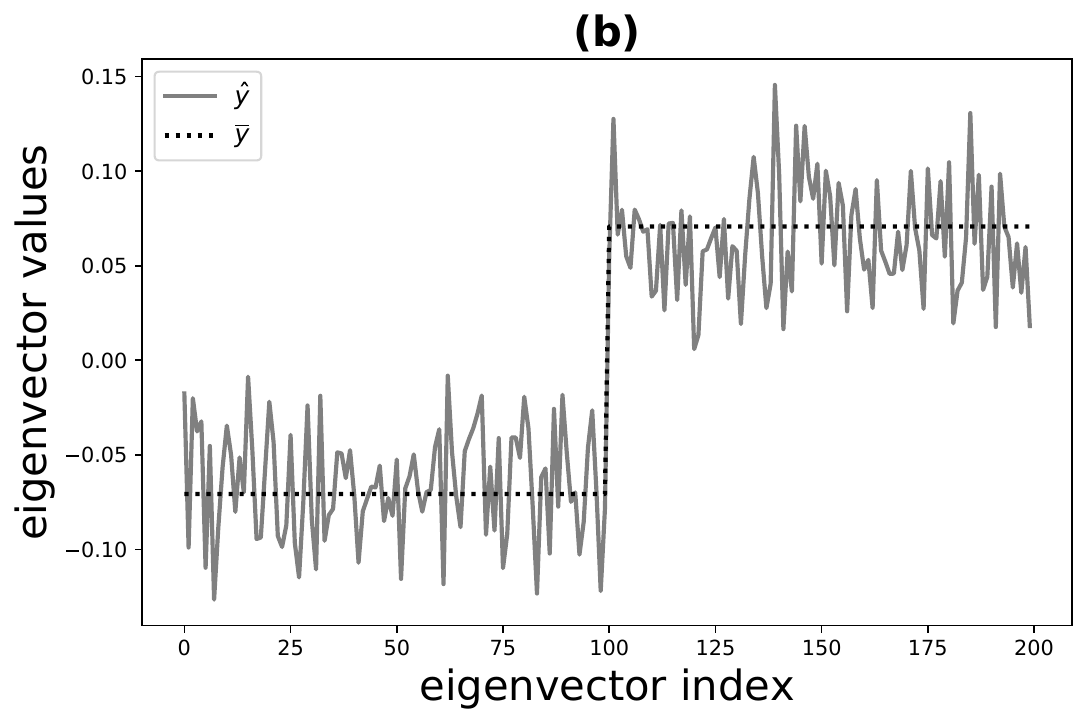}
\includegraphics[width=.31\textwidth]{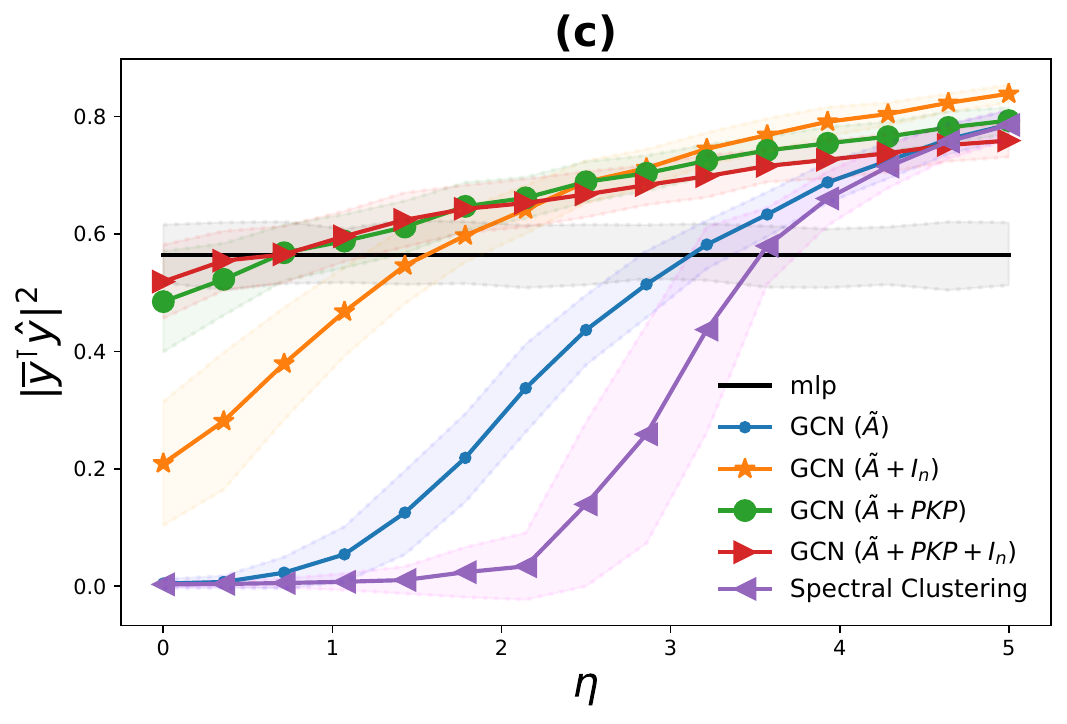}
\caption{\textbf{(a)} Eigenvalues distribution of $\tilde \mX\tilde \mX^\intercal$ versus the theoretical density as per Theorem \ref{thm:main} (the theoretical density is obtained as $f(x)=\frac{1}{\pi}\lim_{\epsilon\to 0} \Im [q(x+i\epsilon)]$ where $q(z)=\frac1n \Tr(\bar \mQ_{\tilde \mX}(z))$). \textbf{(b)} Eigenvector of $\tilde \mX\tilde \mX^\intercal$ corresponding to its largest eigenvalue which correlates with $\bar \vy$. The parameters are: $p=1000$, $n=200$, $q=0.5$, $\eta=4$ and $\vmu=[2,\vzero_{p-1}]^\intercal$. \textbf{(c)} Alignment between the largest eigenvector of $\tilde \mX\tilde \mX^\intercal$ and the labels vector $\bar \vy$ for different added node feature kernel message passing strategies in terms of $\eta$. The different parameters are: $p=500$, $n=250$, $q=0.4$, $\vmu=[1.7,\vzero_{p-1}]^\intercal$ and the kernel matrix has entries $K_{ij}=\vx_i^\intercal \vx_j,$ 
mean and std computed over $100$ runs. \textit{\textbf{The GCN with message passing operator $\tilde\mA + \mP\mK\mP$ outperforms other models when the graph structure is noisy (i.e., low values of $\eta$).}}}\label{fig:abc}
    \vspace{-.3cm}
\end{figure*}
Our main 
result (Theorem \ref{thm:main}) provides a deterministic equivalent for the resolvent of $\tilde \mX\tilde\mX^\intercal$ defined as
\begin{align}
    \mQ_{\tilde\mX}(z) = \left( \tilde\mX \tilde\mX^\intercal + z\mI_n \right)^{-1}.
\end{align}
\begin{theorem}\label{thm:main} Define the quantities $\gamma_f=\Vert \vmu \Vert^2$, $\gamma_g=q^2\eta$, $\nu = q^2(1-q^2)$ and the matrices $\mU = \begin{bmatrix}
    \bar \vy & \vphi
    \end{bmatrix} \in \sR^{n\times 2}$,
\begin{align*}
    \mB = \begin{bmatrix}
    \gamma_g^2(\frac{\gamma_f}{c}+1) & \gamma_g(\frac{\gamma_f}{c}+1)\\
    \gamma_g(\frac{\gamma_f}{c}+1) & \frac{\gamma_f}{c}
    \end{bmatrix},\quad
    \mT = \begin{bmatrix}
    1 & 0\\
    0 & \nu
    \end{bmatrix},
\end{align*}
where $\bar \vy =\frac{\vy}{\sqrt n}$ (with $\vy\in\{-1, 1\}^n$ the vector of labels) and $\vphi = \frac{1}{\sqrt n} \mN\bar\vy$ with $\mN \in \sR^{n\times n}$ a random matrix having random \textit{i.i.d.\@} entries with zero mean and variance $\nu$. Under Assumptions \ref{ass:GMM}, \ref{ass:SBM}, \ref{ass:Growth_rate} and \ref{ass:conditions}, the resolvent $\mQ_{\tilde\mX}(z)$ has a deterministic\footnote{The matrix $\bar \mQ_{\tilde\mX}(z)$ is not deterministic since it depends on the random vector $\vphi$. However, since we are interested in evaluating quantities of the forms $\frac1n \Tr(\mM\bar\mQ_{\tilde\mX}(z))$ or $\vu^\intercal \bar\mQ_{\tilde\mX}(z) \vv$ for $\mM$, $\vu$ and $\vv$ independent of $\vphi$, $\bar \mQ_{\tilde\mX}(z)$ has a deterministic behaviour asymptotically as $n\to \infty$. } equivalent defined as
\begin{align*}
    \bar\mQ_{\tilde\mX}(z) = \zeta \cdot (1+\delta_1) \left( \mI_n - \zeta \mU \left[ \mB^{-1} + \zeta \mT \right]^{-1}\mU^\intercal \right),
\end{align*}
where $\zeta = \frac{1+\delta_2}{\nu+z(1+\delta_1)(1+\delta_2)}$ and $(\delta_1, \delta_2)$ is the unique couple solution of the fixed point equations system
{\small\begin{align*}
    \delta_1 = \frac1c \frac{\nu (1+\delta_1)}{\nu + z(1+\delta_1)(1+\delta_2)},\, \delta_2 = \frac{\nu (1+\delta_2)}{\nu + z(1+\delta_1)(1+\delta_2)}.
\end{align*}}
\end{theorem}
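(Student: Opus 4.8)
The plan is to reduce $\tilde\mX\tilde\mX^\intercal=\gso^\intercal\mX\mX^\intercal\gso$ to a bulk random matrix plus an explicit rank-$2$ perturbation, produce a deterministic equivalent for the resolvent of the bulk, and then re-insert the low-rank part via the Woodbury identity. First I would use Assumptions~\ref{ass:SBM} and~\ref{ass:conditions}.2 to write $\mA=\E[\mA]+\mN$, where $\mN$ has independent (up to the negligible diagonal) zero-mean entries of variance $q^2C_{ab}(1-q^2C_{ab})=\nu+o(1)$, so that after the centring by $\vq\vq^\intercal$ and the scaling $1/\sqrt n$ one has $\gso=\tfrac1{\sqrt n}\mN+\text{(bounded-rank deterministic part aligned with }\vy,\text{ proportional to }\gamma_g=q^2\eta\text{)}$. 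Substituting this together with the Gaussian-mixture form $\mX^\intercal=\tfrac{\vmu}{\sqrt p}\vy^\intercal+\mZ^\intercal$ (Assumption~\ref{ass:GMM}) into $\tilde\mX=\gso^\intercal\mX$, expanding, and discarding all terms that vanish in operator norm — legitimate by the norm controls of Assumptions~\ref{ass:Growth_rate}.2 and~\ref{ass:conditions} — leaves
\[
\tilde\mX=\tfrac1{\sqrt n}\mN^\intercal\mZ+\mU\,\mathbf R+o_{\Vert\cdot\Vert}(1),\qquad \mathbf R\in\sR^{2\times p},
\]
where the two surviving left directions are exactly the columns $\bar\vy$ and $\vphi$ of $\mU$: this uses that $\gso^\intercal\vy$ concentrates within $\mathrm{span}\{\bar\vy,\vphi\}$ (explaining the coupling between the two columns) and the alignment $\mX^\intercal\vy=\tfrac n{\sqrt p}\vmu+\mZ^\intercal\vy$, which injects the ratio $\gamma_f/c$ with $\gamma_f=\Vert\vmu\Vert^2$; the Gram $\mathbf R\mathbf R^\intercal$ already reproduces the diagonal of $\mB$ and the bulk of its off-diagonal. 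Squaring gives $\tilde\mX\tilde\mX^\intercal=\mM_0+\tfrac1{\sqrt n}\mN^\intercal\mZ\,\mathbf R^\intercal\mU^\intercal+\mU\,\mathbf R\,\tfrac1{\sqrt n}\mZ^\intercal\mN+\mU\,\mathbf R\mathbf R^\intercal\mU^\intercal$, with the bulk $\mM_0:=\tfrac1n\mN^\intercal\mZ\mZ^\intercal\mN$.

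Next I would establish a deterministic equivalent for $\mQ_0(z):=(\mM_0+z\mI_n)^{-1}$. The point is that $\mM_0$ is, up to the factor $p$, a sample covariance matrix: conditionally on $\mN$ its $p$ columns are i.i.d.\ Gaussian with covariance $\tfrac1n\mN^\intercal\mN$, which is itself a Marchenko–Pastur matrix (aspect ratio $1$, scale $\nu$). Applying the standard resolvent/leave-one-out machinery of random matrix theory (cf.\ \citep{louart2018random,fan2020spectra}) twice — once for the outer sample-covariance structure, whence the $\tfrac1c$ in the $\delta_1$ equation, and once for the inner ratio-$1$ Marchenko–Pastur population spectrum — yields $\mQ_0(z)\leftrightarrow\zeta(1+\delta_1)\mI_n$ with $\zeta$ and the pair $(\delta_1,\delta_2)$ characterised by the stated coupled fixed-point system; existence, uniqueness and positivity on $\sC_+$ follow from the usual analyticity/contraction argument. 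Because $\mN$ has i.i.d.\ entries, $\tfrac1n\mN^\intercal\mN$ is rotationally generic, so the deterministic equivalent is a scalar multiple of $\mI_n$ — hence the $\mI_n$-term in $\bar\mQ_{\tilde\mX}$.

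Finally I would apply the Woodbury identity to $(\mM_0+\text{(cross terms)}+\mU\mathbf R\mathbf R^\intercal\mU^\intercal+z\mI_n)^{-1}$, reducing everything to the $2\times2$ objects $\mU^\intercal\mQ_0\mU$, $\mU^\intercal\mQ_0(\tfrac1{\sqrt n}\mN^\intercal\mZ)$ and $(\tfrac1{\sqrt n}\mZ^\intercal\mN)\mQ_0(\tfrac1{\sqrt n}\mN^\intercal\mZ)$, which are controlled by concentration of bilinear forms (here the concentrated-vector framework recalled after Assumption~\ref{ass:GMM}, stable under Lipschitz maps, and the limits $\mU^\intercal\mU\to\mT$, i.e.\ $\Vert\bar\vy\Vert^2=1$, $\Vert\vphi\Vert^2\to\nu$, $\bar\vy^\intercal\vphi\to0$, enter). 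Solving the resulting $2\times2$ self-consistent equation and simplifying collapses the bookkeeping into the single factor $[\mB^{-1}+\zeta\mT]^{-1}$, and assembling the three pieces gives the claimed $\bar\mQ_{\tilde\mX}(z)$.

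The hard part will be this last step. The spike direction $\vphi=\tfrac1n\mN\vy$ is built from the very same noise $\mN$ that drives the bulk $\mM_0=\tfrac1n\mN^\intercal\mZ\mZ^\intercal\mN$, so the rank-$2$ perturbation is \emph{not} independent of the bulk and a naive Woodbury argument is invalid; resolving this self-consistently is precisely what forces the bulk quantity $\delta_1$ to appear inside the rank-$2$ correction and what produces the extra $\gamma_g$ in the off-diagonal of $\mB$ beyond what $\mathbf R\mathbf R^\intercal$ alone contributes. Subordinate difficulties are the double fixed-point analysis for the product model $\mM_0$ (routine in spirit but laborious, including the well-posedness of the system) and the careful norm-by-norm bookkeeping in Step~1 certifying that the signal collapses to exactly rank $2$ — in particular that no further direction (e.g.\ one arising from $\gso^\intercal\mZ\vmu$ or from the centring) survives in operator norm.
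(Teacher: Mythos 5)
Your proposal is correct in substance and follows the same overall strategy as the paper — linearise $\gso$ as $\gamma_g \bar\vy\bar\vy^\intercal + \tfrac{1}{\sqrt n}\mN$, obtain a deterministic equivalent for the bulk via two nested applications of the sample-covariance resolvent lemma (whence the coupled $(\delta_1,\delta_2)$ system with the $1/c$), and re-insert the rank-2 spike $\mU\mB\mU^\intercal$ by Woodbury with $\mU^\intercal\mU\to\mT$ — but it differs in one organisational choice that matters for the workload. Where you keep the additive Gaussian-mixture form $\mX=\sqrt{n/p}\,\bar\vy\vmu^\intercal+\mZ$ and therefore must expand $\gso^\intercal\mX\mX^\intercal\gso$ into bulk, rank-2 and cross terms (and then certify the collapse to rank 2 and chase the extra $\gamma_g$ in the off-diagonal of $\mB$ through the bulk--signal cross terms), the paper replaces $\mX$ by the second-order-equivalent \emph{multiplicative} model $(\tfrac{\gamma_f}{c}\bar\vy\bar\vy^\intercal+\mI_n)^{1/2}\mZ$, so that $\tilde\mX=\mM\mZ$ with $\mM=(\gamma_g\bar\vy\bar\vy^\intercal+\tfrac{1}{\sqrt n}\mN)(\tfrac{\gamma_f}{c}\bar\vy\bar\vy^\intercal+\mI_n)^{1/2}$. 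A single application of the deterministic-equivalent theorem (conditionally on $\mN$) then produces $\bigl(\tfrac{\mM\mM^\intercal}{1+\delta_1}+z\mI_n\bigr)^{-1}$, and the identity $\mM\mM^\intercal=\mU\mB\mU^\intercal+\tfrac1n\mN\mN^\intercal$ delivers the spike \emph{exactly}, with the full off-diagonal $\gamma_g(\tfrac{\gamma_f}{c}+1)$ and no cross terms left to control; Woodbury is then applied to a clean rank-2 perturbation of the $\mN$-bulk, with the perturbation lemma disposing of the spike's contribution to $\delta_1$. Your route is workable but buys you the hardest bookkeeping for free in the paper's version. Your closing observation that $\vphi=\tfrac{1}{\sqrt n}\mN\bar\vy$ is correlated with the bulk is well taken: this dependence survives in the paper's proof too (in evaluating $\mU^\intercal\bar\mQ^{-\mB}_{\mY\vert\mN}(z)\mU$, where both $\mU$ and the resolvent depend on $\mN$), and the paper dispatches it rather tersely via $\mU^\intercal\mU\to\mT$; so your flagging of this as the delicate step is accurate rather than a gap in your plan.
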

\begin{proof}[Sketch of proof] The proof starts by determining a random equivalent of the adjacency matrix $\mA$. Since $A_{ij}$ is Bernoulli distributed (see Assumption \ref{ass:SBM}) with parameter $q^2(1  + (-1)^{k_i}\delta_{k_i=k_j} \eta /\sqrt n )$ with $k_i\in \{1, 2\}$ the class of node $i$, we may write $A_{ij}=q^2+q^2(-1)^{k_i}\delta_{k_i=k_j} \eta /\sqrt n + N_{ij} $ where $N_{ij}$ is a zero mean random variable with variance $\nu+\mathcal{O}(n^{-\frac12})$. Hence, $\Vert \gso - (q^2\eta \bar\vy \bar \vy^\intercal + \frac{1}{\sqrt n} \mN ) \Vert \to 0$ as $n\to \infty$. Finally, exploiting standard random matrix theory tools from \citep{hachem2007deterministic, louart2018concentration} provides the deterministic equivalent $\bar \mQ_{\tilde\mX}(z)$.
\end{proof}

In essence, Theorem \ref{thm:main} shows that the deterministic equivalent $\bar \mQ_{\tilde\mX}(z)$ is composed of two main terms: a diagonal matrix $\zeta\cdot(1+\delta_1)\mI_n$, which describes the behaviour of the noise in the data model (both adjacency and node features), and an informative rank-2 matrix $\mU \left[ \mB^{-1} + \zeta \mT \right]^{-1}\mU^\intercal$ which correlates with the vector of labels $\bar \vy$ if the adjacency matrix and/or the node features are informative, i.e., values $\gamma_g$ and $\gamma_f,$ respectively, are sufficiently large. Figure \ref{fig:abc}(a) and (b) depict a histogram of the eigenvalues of $\tilde\mX\tilde\mX^\intercal$ which converges to the limiting distribution described by Theorem \ref{thm:main}, as well as its dominant eigenvector which correlates with $\bar \vy.$ Importantly, our analysis allows us to conclude that when the graph structure is completely noisy (i.e., $\eta=0$), the dominant eigenvector of $\tilde\mX\tilde\mX^\intercal$ is no longer aligned with $\bar \vy$ even if the node features are informative (i.e., $\gamma_f$ large) as will be clarified 
in Corollary \ref{cor:noisy}. 
\begin{corollary}[Case $\eta=0$]\label{cor:noisy} Recall the notation and Assumptions of Theorem \ref{thm:main}, for $\eta=0$ (i.e., a non-informative graph structure), $\bar\mQ_{\tilde \mX}(z)$ takes the form
\begin{align}
    \bar \mQ_{\tilde \mX}(z) = \zeta \cdot (1+\delta_1) \left( \mI_n - \frac{\zeta^2 \gamma_f}{c + \zeta \nu \gamma_f} \vphi \vphi^\intercal \right). \label{eqn:cor}
\end{align}
And, for $\hat \vy$ the eigenvector of $\tilde\mX\tilde\mX^\intercal$ corresponding to its largest eigenvalue, $\vert \bar \vy^\intercal \hat \vy \vert^2 \to_{n\to \infty} 0$.
\end{corollary}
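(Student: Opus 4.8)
The plan is to specialise Theorem \ref{thm:main} to $\eta = 0$ and then extract the eigenvector alignment. First, setting $\eta = 0$ forces $\gamma_g = q^2\eta = 0$, so the matrix $\mB$ degenerates: its first row and column vanish and it collapses to the $2\times 2$ matrix with only the $(2,2)$ entry $\gamma_f/c$ nonzero. Correspondingly, since the only surviving rank-one contribution is through $\vphi$ (the $\bar\vy$ direction enters $\mU$ only through the block of $\mB$ that has been killed), the product $\mU[\mB^{-1}+\zeta\mT]^{-1}\mU^\intercal$ reduces to a rank-one matrix in $\vphi\vphi^\intercal$. Carrying out the $2\times 2$ inverse carefully (treating the degenerate block as a limit, or equivalently restricting to the $\vphi$-coordinate with the scalar $\gamma_f/c$ in the $\mB$ slot and $\nu$ in the $\mT$ slot) gives the scalar coefficient
\begin{align*}
\frac{\zeta^2}{(\gamma_f/c)^{-1} + \zeta\nu} = \frac{\zeta^2\gamma_f}{c + \zeta\nu\gamma_f},
\end{align*}
which is exactly \eqnref{eqn:cor}. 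This is the routine part.

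The substantive step is the second one: deducing $|\bar\vy^\intercal\hat\vy|^2 \to 0$ from the form \eqnref{eqn:cor}. Here I would use the contour-integral characterisation of eigenvector projections recalled in the footnote after \eqnref{eq:Q_bar}: for the top eigenvalue $\lambda$ of $\tilde\mX\tilde\mX^\intercal$ with eigenvector $\hat\vy$, one has $\bar\vy^\intercal\hat\vy\hat\vy^\intercal\bar\vy = \frac{-1}{2\pi i}\oint_{\Gamma}\bar\vy^\intercal\bar\mQ_{\tilde\mX}(-z)\bar\vy\,dz$ up to vanishing error, where $\Gamma$ is a small contour around $\lambda$. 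Plugging \eqnref{eqn:cor} in, the bracketed expression is $\zeta(1+\delta_1)\big(\|\bar\vy\|^2 - \frac{\zeta^2\gamma_f}{c+\zeta\nu\gamma_f}(\vphi^\intercal\bar\vy)^2\big)$, and the key observation is that $\vphi = \frac{1}{\sqrt n}\mN\bar\vy$ with $\mN$ having zero-mean i.i.d.\@ entries of variance $\nu$, \emph{independent} of $\bar\vy$; hence $\vphi^\intercal\bar\vy = \frac{1}{\sqrt n}\bar\vy^\intercal\mN\bar\vy = \frac{1}{n^{3/2}}\vy^\intercal\mN\vy$, which concentrates around its mean $0$ with fluctuations of order $n^{-3/2}\cdot n = n^{-1/2} \to 0$. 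So the only surviving contribution from $\bar\vy^\intercal\bar\mQ_{\tilde\mX}(-z)\bar\vy$ comes from the isotropic term $\zeta(1+\delta_1)\|\bar\vy\|^2 = \zeta(1+\delta_1)$, which is analytic on a neighbourhood of $\lambda$ (the scalar functions $\delta_1,\delta_2,\zeta$ extract no isolated pole from the $\bar\vy$ direction once $\gamma_g=0$), so its contour integral vanishes. Equivalently and more transparently: in the $\eta=0$ case the only spike direction of $\tilde\mX\tilde\mX^\intercal$ is $\vphi$, not $\bar\vy$, and $\vphi$ is asymptotically orthogonal to $\bar\vy$.

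The main obstacle is making the degeneration of $\mB$ rigorous: $\mB$ becomes singular as $\eta\to 0$, so $\mB^{-1}$ does not literally exist and one must either re-derive the deterministic equivalent directly at $\eta=0$ from the proof of Theorem \ref{thm:main} (where the rank-one structure $q^2\eta\bar\vy\bar\vy^\intercal$ in $\gso$ simply drops out, leaving $\gso \approx \frac{1}{\sqrt n}\mN$), or interpret $[\mB^{-1}+\zeta\mT]^{-1}$ via the Woodbury-type identity $\mU[\mB^{-1}+\zeta\mT]^{-1}\mU^\intercal = \mU\mB[\mI + \zeta\mT\mB]^{-1}\mU^\intercal$ — which is well-defined even when $\mB$ is singular — and then substitute $\mB = \operatorname{diag}(0,\gamma_f/c)$ in the appropriate basis. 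The latter route is cleanest and avoids revisiting the proof. A secondary point to handle is confirming that $\hat\vy$ indeed corresponds to an isolated eigenvalue outside the bulk (so that the contour argument applies) when $\gamma_f$ is large enough; when $\gamma_f$ is too small there is no spike and the conclusion $|\bar\vy^\intercal\hat\vy|^2\to 0$ is then immediate since $\hat\vy$ is a bulk eigenvector delocalised with respect to any fixed direction.
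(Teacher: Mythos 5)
Your proposal is correct and follows essentially the same route as the paper's proof: specialise the deterministic equivalent to $\eta=0$, represent $\vert\bar\vy^\intercal\hat\vy\vert^2$ as a Cauchy integral of $\bar\vy^\intercal\bar\mQ_{\tilde\mX}(-z)\bar\vy$ over a contour around the top eigenvalue, kill the $\vphi\vphi^\intercal$ term by the variance computation $\mathbb{E}[(\vphi^\intercal\bar\vy)^2]=\nu/n\to 0$, and observe that the remaining isotropic term $\zeta(1+\delta_1)$ contributes no residue. The paper handles the degenerate $\mB$ by simply re-deriving the equivalent at $\eta=0$ (your first suggested option) rather than via the Woodbury-type rewriting, but this is a presentational difference only.
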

\begin{proof}[Sketch of proof] 
Expression \eqnref{eqn:cor} follows from Theorem \ref{thm:main} 
by simply taking the limit as $\eta\to 0$. The second part of the Corollary is obtained by computing $\vert \bar \vy^\intercal \hat \vy \vert^2 = \frac{-1}{2i\pi}\oint_\Gamma \bar \vy^\intercal \mQ_{\tilde \mX}(-z) \bar \vy dz$ where $\Gamma$ is a small positively oriented complex contour surrounding the largest eigenvalue of $\tilde\mX\tilde\mX^\intercal$. Hence, using $\bar \mQ_{\tilde \mX}(z)$ as a proxy allows us to state $\vert \bar \vy^\intercal \hat \vy \vert^2 + \frac{1}{2i\pi}\oint_\Gamma \bar \vy^\intercal \bar\mQ_{\tilde \mX}(-z) \bar \vy dz \to 0 $ almost surely as $n\to \infty$. The final result is obtained by showing that $\bar \vy^\intercal \vphi \vphi^\intercal \bar \vy$ concentrates around its expectation with $\mathbb E \left[ \bar \vy^\intercal \vphi \vphi^\intercal \bar \vy\right] = \frac1n\Var[\bar\vy^\intercal\mN\bar \vy] = \frac{\nu}{n}\to 0$ and by evaluating $\frac{1}{2i\pi}\oint_\Gamma \zeta(-z)(1+\delta_1(-z))dz=0$.
\end{proof}
The main conclusion from Corollary \ref{cor:noisy} is that when the graph structure is completely random (when $\eta=0$, $\Vert \tilde \mA - \frac{1}{\sqrt n}\mN \Vert \to 0$), the largest eigenvector of $\tilde\mX\tilde\mX^\intercal$ (which is intuitively supposed to be informative) does not correlate with the labels vector $\bar \vy$ independently of the information contained in the node features. 
To overcome this issue, 
we propose in the now following Section \ref{sec:kernel-theory} to utilise node feature kernels to ensure the preservation of node feature information.

\subsection{Message Passing through Node Feature Kernels}\label{sec:kernel-theory}
As discussed in Section \ref{subsec:yyt}, when the graph structure (in the extreme case) is completely random, the  \textit{random} GCN model fails to extract information from the node features. To make the message passing informative and thereby to robustify the GCN, we propose to consider the operator $\tilde \mA + \bar \mK$ instead of $\tilde \mA,$ where $\bar \mK$ is a kernel matrix computed on the node features $\mX$. Indeed, let $\mK$ be a matrix with entries $K_{ij}=\kappa(\vx_i^\intercal \vx_j)$ for some smooth function $\kappa:\sR\to \sR$. Relying on \citep{el2010spectrum}, the kernel matrix $\mK$ can be approximated in spectral norm asymptotically as $n\to \infty$ by
\begin{align}
     \tilde \mK =\kappa(0) \vone_n\vone_n^\intercal + \kappa'(0) \left( \frac{\gamma_f}{c}\bar \vy\bar \vy^\intercal + \mZ \mZ^\intercal \right) + \mDelta,
\end{align}
where $\mDelta = \frac{\kappa''(0)}{2p}\vone_n\vone_n^\intercal + (\kappa(1) - \kappa(0) - \frac{\gamma_f}{c}\kappa'(0))\mI_n$. Hence, considering the matrix $\tilde \mA + \mP \mK \mP$, where $\mP = \mI_n - \frac1n \vone_n\vone_n^\intercal$ (the centring matrix), maintains the informative nature of the message passing step (through the term $\frac{\gamma_f}{c}\bar \vy \bar \vy^\intercal$) even in the case where the operator $\tilde \mA$ is not informative. 
Intuitively, the addition of the node feature kernel can be interpreted as considering both the originally recorded graph and a node feature similarity graph in the message passing architecture. This addition gives GNNs the necessary expressive ability to preserve information present in the node features, which is lost in the case of uninformative or noisy graph structures. 

Figure \ref{fig:abc}(c) shows the performance of different message passing strategies (compared to random MLP and spectral clustering; involving only node features or adjacency matrix, respectively) which confirms the effectiveness of introducing a node feature kernel in the regime where the graph similarity is noisy (i.e., low values of $\eta$), a property which is also validated for practical GCNs as we will discuss in Section \ref{sec:experiments}. 


\section{EXPERIMENTS} \label{sec:experiments}
In order to validate our theoretical findings in real-world scenarios, we experiment on the node classification task using GCNs on perturbed data. 
In Section~\ref{sec:result}, we begin by justifying the use of the \emph{random} GCN in the theoretical analysis. 
Then, we discuss results from experiments on both synthetic and real-world graphs involving a perturbation scheme on their edges. 
We show that the observed phenomena extend to deeper GCN architectures, and cases with node feature perturbations. Finally, in the case of deeper GCN models under structural perturbation, we demonstrate that our proposed method is comparable with state-of-the-art GNN models also placing a particular emphasis on the  node features and can be further improved when combined with other techniques.  

We work on synthetic SBM graphs aligned with Assumption~\ref{ass:conditions}, with the intra-community link probability being $q^2(1+\frac{\eta}{\sqrt{n}})$ and the inter-community link probability being $q^2$. We vary the parameter $\eta$ to generate SBM graphs with different types of community structure and keep other parameters fixed as $q=0.5,$ $p=2500,$ $n=1600,$ $\mu=(2, 0, \ldots, 0)$. 

We furthermore work with six real-world datasets which often serve as node classification benchmarks. 
These are the three well-studied citation networks of Cora, CiteSeer and PubMed \citep{SenNamataBilgicGetoorGalligherEliassi-Rad2008}, an extended version of Cora \citep{bojchevski2018deep}, called CoraFull, an Amazon co-purchase graph of Photo and a Co-author network from the authors of Computer Science (CS) \citep{shchur2018pitfalls}.
We follow the semi-supervised node classification setting proposed by \citet{pmlr-v48-yanga16}, i.e., we use their train/valid/test split for Cora/CiteSeer/PubMed, and we randomly sample $20$ nodes from each class as training set, $500$ nodes as validation set and another $1000$ nodes as test set for CoraFull/Photo/CS. Each experiment is repeated 10 times. Implementation details and a summary of dataset statistics can be found in the Supplementary Material Section \ref{appenx:datasets}. 

\paragraph{Perturbation Scheme}
The studied perturbation scheme involves both \emph{edge-deletion} noise, where a certain amount of existing edges are randomly sampled and removed from original graph,  
and \emph{edge-insertion} noise, where we add a certain amount of connections sampled from the non-existing edges in the original graph. We consider scenarios where edges are removed or added or both. The ratio of edges changed, i.e., the perturbation ratio, is denoted by $\alpha$ for \emph{edge-deletion} and $\beta$ for \emph{edge-insertion}.
A node feature kernel matrix is added to study its impact in practice, 
as shown in the following equation,\begin{equation}\label{eq:kernel}
    \mX^{(i+1)} = \sigma\left( (\epsilon\hat{\mA} + (1-\epsilon)\mathcal{N}(\mK) )\mX^{(i)}\mW \right),
\end{equation}
where $\hat{\mA}$ is the GCN message passing operator of the perturbed graph, $\mathcal{N}(\mK) = \mD_{\mK}^{-1/2} \mK \mD_{\mK}^{-1/2}$ is the normalised kernel matrix built from node features ($\mD_{\mK} = \mathrm{diag}(\mK \vone_n)$). We are degree normalising the kernel to match the graph representation of the GCN message passing operator.

\paragraph{Node Feature Kernels} As stated in Section~\ref{sec:kernel-theory}, we use the kernel to introduce information from the node features to the message passing structure. The choices of qualified smooth kernel functions are many. In our experiment, we perform a proof of concept using the simple linear kernel, defined as the inner product between node features $K_{ij} = \vx_i^\intercal \vx_j.$
\begin{figure}[tp!]
\centering
\includegraphics[width=\linewidth]{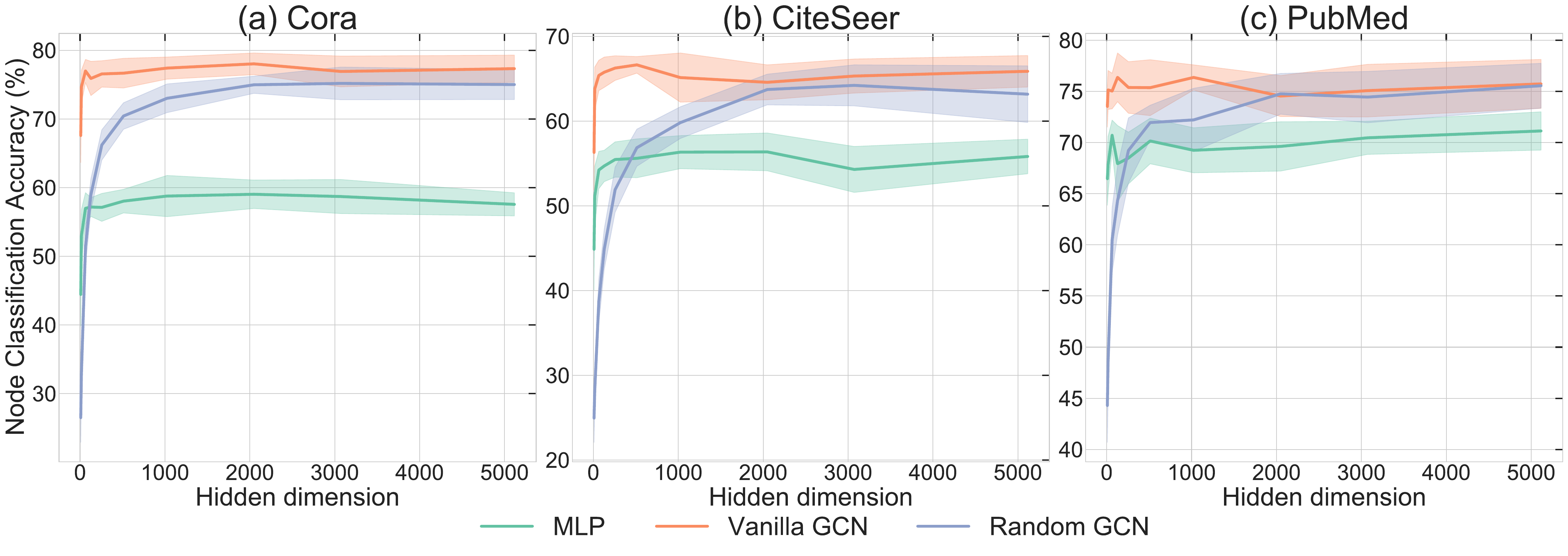}
\caption{Performance change over the embedding dimension with different models: \emph{random} GCN, vanilla GCN and MLP.}\label{fig:random-feature}
\end{figure}

\paragraph{Kernel Sparsification}
Using the full kernel matrix, where the kernel value is recorded between every pair of nodes, is both computationally costly and may incorporate redundant information. Therefore, we adopt a sparsification method using the adjacency matrix of the graph, with which \eqnref{eq:kernel} can be rewritten as,
\begin{equation}\label{eq:ksparse}
    \!\!\!\!\mX^{(i+1)}\! =\! \sigma\left( (\epsilon\hat{\mA} + (1-\epsilon) \mathcal{N}(\mK\circ\hat{\mA}))\mX^{(i)}\mW \right),
\end{equation}
where $\circ$ denotes Hadamard product.
A consequence of this sparsification method is that the added computational cost stemming from the consideration of the node feature kernel is linear in the number of edges in the graph $|E|,$ where $E$ denotes the graph's edge set, and the node feature dimension $p,$ i.e., of order $\mathcal{O}(|E|p).$ Our initial experiments sparsifying the kernel matrix by using a threshold below which all entries are set to zero resulted in worse performance and introduced the threshold as an extra hyperparameter. Therefore, we chose to only pursue the sparsification scheme in \eqnref{eq:ksparse}. This preliminary  observation could be a result of the particular node features that are recorded in our datasets. 
\begin{table*}[!t]
\caption[Performance of GCN with and without node-feature kernel under perturbation on synthetic SBM graphs. Results are set to \textbf{bold} if they are significantly better than their counterparts.]{Performance of GCN with node-feature kernel under perturbation on synthetic SBM graphs. The best results are set to \textbf{bold} if their range of one standard deviation does not overlap with the standard deviation of their counterpart. } \label{tab:sbm} 
\def\arraystretch{1.0}
\scriptsize
\begin{center}
\begin{tabular}{cc|cc|cc|cc|}
& & \multicolumn{2}{c|}{SBM($q=0.5$, $\eta=0$)} & \multicolumn{2}{c|}{SBM($q=0.5$, $\eta=4$)} & \multicolumn{2}{c|}{SBM($q=0.5$, $\eta=-4$)}  \\
& ($\alpha$, $\beta$) & GCN & GCN-k & GCN & GCN-k & GCN & GCN-k \\
\hline
& (0.0, 0.0) & $50.53 \pm 0.49$ & $\mathbf{66.36 \pm 0.81}$ & $\mathbf{64.42 \pm 0.43}$ & $62.26 \pm 1.04$ & $\mathbf{63.20 \pm 0.94}$ & $61.03 \pm 1.08$\\

\multirow{2}{*}{Deletion} & (0.2, 0.0) & $51.03 \pm 0.56$ & $\mathbf{65.44 \pm 1.07}$ & $58.63 \pm 0.68$ & $\mathbf{71.57 \pm 1.42}$ & $\mathbf{60.89 \pm 0.83}$ & $54.91 \pm 1.00$\\

& (0.5, 0.0) & $49.29 \pm 0.59$ & $\mathbf{64.14 \pm 1.01}$ & $60.76 \pm 1.29$ & $\mathbf{68.80 \pm 2.04}$ & $58.41 \pm 1.11$ & $59.51 \pm 2.47$\\ 

\multirow{2}{*}{Insertion} & (0.0, 0.5) & $50.57 \pm 0.75$ & $\mathbf{68.57 \pm 1.25}$ & $60.49 \pm 0.40$ & $\mathbf{68.20 \pm 1.38}$ & $58.82 \pm 1.16$ & $\mathbf{63.54 \pm 0.97}$\\

& (0.0, 1.0) & $49.19 \pm 0.47$ & $\mathbf{59.31 \pm 0.58}$ & $53.67 \pm 1.11$ & $\mathbf{66.57 \pm 1.73}$ & $54.87 \pm 0.53$ & $\mathbf{60.84 \pm 0.75}$\\ 

\multirow{2}{*}{Delet.+Insert.} & (0.5, 0.5) & $49.26 \pm 0.59$ & $\mathbf{68.84 \pm 0.86}$ & $50.50 \pm 0.37$ & $\mathbf{63.36 \pm 1.67}$ & $50.94 \pm 0.86$ & $\mathbf{63.02 \pm 0.91}$\\

& (0.5, 1.0) & $49.84 \pm 0.69$ & $\mathbf{65.49 \pm 1.22}$ & $48.34 \pm 0.22$ & $\mathbf{60.16 \pm 1.21}$ & $49.23 \pm 0.45$ & $\mathbf{59.64 \pm 1.33}$ \\
\end{tabular}
\end{center}
\end{table*}
\begin{table*}[t]
\caption{Performance of GCN with and without node-feature kernel under perturbation on six real-world datasets. The format follows Table \ref{tab:sbm}.} \label{tab:real}
\def\arraystretch{1.0}
\scriptsize
\begin{center}
\begin{tabular}{cc|cc|cc|cc|}
& & \multicolumn{2}{c|}{Cora} & \multicolumn{2}{c|}{CiteSeer} & \multicolumn{2}{c|}{PubMed}  \\
& ($\alpha$, $\beta$) & GCN & GCN-k & GCN & GCN-k & GCN & GCN-k \\
\hline
& (0.0, 0.0) & $\mathbf{79.37 \pm 0.65}$ & $76.94 \pm 0.35$ & $67.45 \pm 0.82$ & $68.08 \pm 0.91$ & $76.04 \pm 0.67$ & $74.68 \pm 0.76$ \\

\multirow{2}{*}{Deletion} & (0.2, 0.0) & $76.15 \pm 0.81$ & $74.83 \pm 1.24$ & $66.79 \pm 0.57$ & $66.94 \pm 0.82$ & $\mathbf{75.82 \pm 0.99}$ & $74.28 \pm 0.39$ \\

& (0.5, 0.0) & $72.49 \pm 0.50$ & $71.18 \pm 1.00$ & $63.53 \pm 0.75$ & $64.84 \pm 1.14$
& $73.95 \pm 0.64$ & $73.25 \pm 0.75$ \\ 

\multirow{2}{*}{Insertion} & (0.0, 0.5) & $68.57 \pm 0.73$ & $\mathbf{73.10 \pm 1.10}$ & $59.85 \pm 0.89$ & $\mathbf{66.11 \pm 1.34}$ & $64.18 \pm 0.67$ & $\mathbf{72.38 \pm 0.79}$ \\

& (0.0, 1.0) & $64.14 \pm 1.02$ & $\mathbf{73.36 \pm 0.98}$ & $55.39 \pm 0.93$ & $\mathbf{64.94 \pm 0.77}$ & $60.56 \pm 0.80$ & $\mathbf{71.31 \pm 0.51}$ \\ 

\multirow{2}{*}{Delet.+Insert.} & (0.5, 0.5) & $54.98 \pm 1.13$ & $\mathbf{66.46 \pm 1.03}$ & $52.84 \pm 0.68$ & $\mathbf{59.03 \pm 1.04}$ & $62.62 \pm 0.72$ & $\mathbf{70.32 \pm 0.82}$ \\

& (0.5, 1.0) & $48.09 \pm 0.88$ & $\mathbf{62.52 \pm 0.59}$ & $42.28 \pm 1.07$ & $\mathbf{58.07 \pm 1.34}$ & $53.25 \pm 1.57$ & $\mathbf{69.65 \pm 0.60}$  \\
\end{tabular}\\
 \vspace{0.05cm}
\begin{tabular}{cc|cc|cc|cc|}
& & \multicolumn{2}{c}{CoraFull} & \multicolumn{2}{c|}{Photo} & \multicolumn{2}{c|}{CS} \\
& ($\alpha$, $\beta$) & GCN & GCN-k & GCN & GCN-k & GCN & GCN-k \\
\hline
& (0.0, 0.0) & $57.21 \pm 0.84$ & $56.88 \pm 0.48$ & $90.94 \pm 0.49$ & $90.09 \pm 0.65$ & $92.89 \pm 0.41$ & $92.63 \pm 0.31$ \\

\multirow{2}{*}{Deletion} & (0.2, 0.0) & $\mathbf{57.25 \pm 0.67}$ & $55.56 \pm 0.69$ & $91.87 \pm 0.40$ & $92.19 \pm 0.45$ & $90.58 \pm 0.48$ & $90.89 \pm 0.48$ \\
& (0.5, 0.0) & $53.90 \pm 0.70$ & $54.62 \pm 0.87$ & $\mathbf{91.10 \pm 0.40}$ & $87.97 \pm 0.54$ & $89.75 \pm 0.60$ & $\mathbf{91.27 \pm 0.67}$ \\

\multirow{2}{*}{Insertion} & (0.0, 0.5) & $48.11 \pm 0.89$ & $\mathbf{51.79 \pm 0.65}$ & $82.79 \pm 1.43$ & $84.18 \pm 1.27$ & $87.16 \pm 0.65$ & $\mathbf{90.81 \pm 0.70}$ \\
& (0.0, 1.0) & $41.76 \pm 1.03$ & $\mathbf{51.91 \pm 1.00}$ & $72.70 \pm 6.40$ & $\mathbf{79.58 \pm 1.80}$ & $80.34 \pm 0.80$ & $\mathbf{90.61 \pm 0.37}$ \\

\multirow{2}{*}{Delet.+Insert.} & (0.5, 0.5) & $34.70 \pm 0.47$ & $\mathbf{46.50 \pm 0.61}$ & $69.70 \pm 3.70$ & $\mathbf{74.65 \pm 2.36}$ & $73.75 \pm 0.98$ & $\mathbf{87.28 \pm 0.72}$ \\
& (0.5, 1.0) & $27.50 \pm 1.04$ & $\mathbf{43.04 \pm 0.77}$ & $61.13 \pm 2.49$ & $63.73 \pm 5.04$ & $66.26 \pm 0.95$ & $\mathbf{87.51 \pm 0.58}$ \\
\end{tabular}
\end{center}
\end{table*}

\subsection{Experiment Analysis} \label{sec:result}


\paragraph{Asymptotic Analysis of Random GCN} 
To validate the practical applicability of the theoretical analysis in Section \ref{sec:theory},
we study the asymptotic behaviour of the \emph{random} GCN, the vanilla GCN and a MLP baseline when the hidden dimension of node features grows. In Figure \ref{fig:random-feature}, we observe that with increasing hidden dimension, the performance of both the vanilla GCN and MLP remains stable, while the performance of \emph{random} GCN converges to vanilla GCN's accuracy. Between hidden dimensions of $2000$ and $3000$ the performance of \emph{random} GCN starts to match that of vanilla GCN. 
Hence, we have given an empirical indication of the conditions under which our theoretical model, the \emph{random} GCN, and the vanilla GCN are equivalent. 

\paragraph{Robustness to Structural Noise: Synthetic SBM}
We first test the performance of the proposed model on synthetic SBM graphs. Three types of SBM graph are considered, which are distinguished by the parameter $\eta$. $\eta=0$, $\eta=4$ and $\eta=-4$ correspond to the cases where the synthetic graph has no, a homophilic and a heterophilic community structure, respectively. We experiment on perturbation scenarios with different \emph{egde-deletion} and \emph{edge-insertion} ratios. For each scenario, we record the performance of the vanilla GCN as well as its performance after adding the node-feature kernel, denoted by an appendage ``-k''\footnote{If not specified, the weight coefficient $\epsilon$ of the added kernel in graph propagation is set to $0.5$.} in Table \ref{tab:sbm}. 

When a graph has no community structure, structural perturbation has little impact and adding node-feature information boosts the performance. When the graphs are homophilic and with a clear community structure, the impact from graph-structural noise become more visible. Adding a node-feature kernel significantly improves the model's robustness against \emph{edge-deletion} and \emph{edge-insertion} noise and their mix. The same conclusion can be drawn on heterophilic graphs perturbed by \emph{edge-insertion} noise. 

\paragraph{Robustness to Structural Noise: Real-World datasets}
In this set of experiments, we study the change of model performance 
under the same perturbation setting on real-world datasets. Table~\ref{tab:real} shows the results over different perturbation scenarios.

Naturally the performance decreases gradually when edges are removed or added, as we can see from each column. \emph{Edge-insertion} noise seems to have a larger impact on the performance than the \emph{edge-deletion} noise on these real-world datasets. But this influence can be largely compensated by adding the node feature kernel. Unlike results from synthetic SBM graphs, for \emph{edge-deletion} noise, the addition of the kernel on real-world datasets seems to have almost no impact.

\begin{table*}[t]
\caption{Performance of GCN with and without node-feature kernel under perturbation on deep GCN models, compared with jump knowledge and GCNII. The format follows Table \ref{tab:sbm}, where in addition we underline the second best result.} \label{tab:multi-layer}
\def\arraystretch{1.0}
\scriptsize
\begin{center}
\begin{tabular}{cc|c|c|c|c|c|c|}
& & \multicolumn{6}{c}{Cora} \\
& ($\alpha$, $\beta$) & GCN & GCN-k ($\epsilon=0.5$) & GCN-k ($\epsilon=0.2$) & GCN-jk & GCNII & GCN-k-jk \\
\hline
 & (0.0, 0.0) & $79.05 \pm 1.36$ & $79.67 \pm 1.17$ & $79.27 \pm 1.50$ & $80.39 \pm 1.13$ & $79.62 \pm 1.24$ & $79.74 \pm 0.75$ \\


Deletion& (0.5, 0.0) & $74.20 \pm 1.15$ & $72.22 \pm 1.25$ & $72.74 \pm 1.23$ & $74.97 \pm 0.71$ & $73.87 \pm 0.91$ & $74.16 \pm 0.77$
\\


Insertion& (0.0, 1.0) & $53.48 \pm 3.49$ & $63.20 \pm 1.78$ & $\underline{71.24 \pm 1.04}$ & $65.86 \pm 0.62$ & $68.26 \pm 1.15$ & $\mathbf{72.94 \pm 0.67}$
\\

Delet.+Insert.& (0.5, 0.5) & $47.40 \pm 1.73$ & $57.34 \pm 1.53$ & $60.77 \pm 1.57$ & $57.94 \pm 0.92$ & $\underline{62.54 \pm 1.55}$ & $\mathbf{67.11 \pm 0.93}$
\\

\end{tabular}
 \vspace{0.05cm}
\begin{tabular}{cc|cc|cc|cc|}
 & &  \multicolumn{6}{c}{CS} \\
& ($\alpha$, $\beta$) & GCN & GCN-k ($\epsilon=0.5$) & GCN-k ($\epsilon=0.2$) & GCN-jk & GCNII & GCN-k-jk \\
\hline
& (0.0, 0.0) & $88.44 \pm 0.84$ & $89.81 \pm 0.52$ & $91.64 \pm 0.39$ & $90.19 \pm 0.59$ & $\mathbf{92.13 \pm 0.39}$ & $\underline{91.73 \pm 0.26}$ \\


Deletion& (0.5, 0.0) & $86.68 \pm 0.57$ & $86.17 \pm 1.06$ & $88.91 \pm 0.62$ & $88.44 \pm 0.69$ & $\underline{90.01 \pm 0.69}$ & $\mathbf{91.43 \pm 0.60}$ \\


Insertion& (0.0, 1.0) & $35.84 \pm 6.91$ & $81.06 \pm 3.94$ & $88.27 \pm 0.92$ & $81.70 \pm 0.63$ & $\underline{89.33 \pm 1.02}$ & $\mathbf{91.42 \pm 0.48}$ \\

Delet.+Insert.& (0.5, 0.5) & $45.08 \pm 4.82$ & $76.27 \pm 1.08$ & $82.23 \pm 1.08$ & $73.08 \pm 1.07$ & $\mathbf{88.66 \pm 0.70}$ & $\underline{87.53 \pm 0.85}$ \\

\end{tabular}
\end{center}
\end{table*}
\begin{table*}[t!]
\caption{Performance of GCN with and without node-feature kernel under both graph-structural and node-feature perturbation on PubMed dataset. The format follows Table \ref{tab:sbm}.} \label{tab:node-feat-noise}
\def\arraystretch{1.0}
\scriptsize
\begin{center}
\begin{tabular}{cc|c|c|c|c|c|c|}
& & \multicolumn{2}{c|}{$\gamma=0.5$} & \multicolumn{2}{c|}{$\gamma=1.0$} & \multicolumn{2}{c|}{$\gamma=5.0$} \\
& ($\alpha$, $\beta$) & GCN & GCN-k & GCN & GCN-k & GCN & GCN-k\\
\hline
 & (0.0, 0.0) & $72.78 \pm 1.62$	& $73.88 \pm 2.04$	
 & $68.40 \pm 2.95$ & $66.82 \pm 1.81$ & $42.43 \pm 2.95$ & $39.14 \pm 2.6$	
 \\


Deletion& (0.5, 0.0) & $71.61 \pm 1.31$ & $70.33 \pm 2.38$	
& $63.81 \pm 3.43$ & $64.93 \pm 2.14$ & $40.41 \pm 1.92$ & $38.11 \pm 1.99$	
\\


Insertion& (0.0, 1.0) &  $58.23 \pm 1.79$ & $\mathbf{68.14 \pm 1.67}$	
& $55.37 \pm 3.33$ & $\mathbf{63.35 \pm 2.30}$	& $38.25 \pm 1.93$	& $38.72 \pm 1.93$	
\\
\end{tabular}
\end{center}
\end{table*}

\paragraph{Deeper GCN architecture and Benchmark Models} The previous experiments are based on a single-layer GCN model. In practice, the best-performing GCN models on these datasets often contain several message-passing layers and therefore, we want to observe whether our theoretical results can be extrapolated to the multi-layer case. We build a $4$-layer GCN model
and repeat our experiments on the Cora and CS datasets.
The results are shown in Table~\ref{tab:multi-layer}. Results of the remaining datasets can be found in the Supplementary Material Section \ref{appenx:additionaL_experiments}.
Moreover, we also consider varying the weight coefficient of the added node-feature kernel in graph propagation and compare with two models specifically proposed for deep GNN architectures: Jumping Knowledge (JK, \citet{xu2018representation}) and a GCN with residual connections and an identity mapping (GCNII, \citet{chen2020simple}). Although not designed to tackle the graph-structural perturbation problem that we study in this paper, the two models both utilise node feature information, which makes them reasonable baselines for our proposed method.

As we can see from Table~\ref{tab:multi-layer}, there is no fundamental change in the trends observed for the single-layer model. Adding the node-feature kernel helps robustify model performance against perturbation when edges are inserted and/or removed. The improvement is even more significant when the weight coefficient $\epsilon$ on the kernel in graph propagation is increased. Additionally, our proposed method, with a high weight on the kernel, is comparable to GCNII model and in general performs better than the JK model. However, since JK can easily be combined with our method, the node-feature kernel and JK architecture yields the best-performing model as can be seen in the rightmost column of Table~\ref{tab:multi-layer}.       

\paragraph{Node feature noise} 
We now study how node feature noise interacts with our proposed model. 
We perform the experiments with the same perturbation setting on the PubMed dataset.
But add Gaussian noise $\mathcal{N}(\vzero, \gamma ~\mathrm{diag}(\sigma_i))$ to the node features where $\sigma_i$ is the estimated variance of the $i^{\mathrm{th}}$ feature and $\gamma$ is a scaling parameter. 
The results are recorded in Table~\ref{tab:node-feat-noise}. We observe that only when the node feature noise is five times larger than 
the graph structure noise ($\gamma=5$), the addition of the node feature kernel stops to benefit the model performance. 
Our previous findings still hold if the node feature noise is reasonably small ($\gamma\leq1$).

\section{CONCLUSION}

We have introduced 
the \emph{random} GCN, which we analysed theoretically using random matrix theory. Our analysis allowed 
us to conclude that \textit{perturbations of the graph structure strongly influence the performance of the GCN regardless of the information contained in the node features.} 
For stochastic blockmodel graphs the presence of community structure (and the degree to which this structure is present) is required (beneficial) for a message passing scheme which leads to eigenvectors of the message passing operator's Gram matrix that align with the node labels. 
These conclusions were confirmed in multiple experiments with the standard GCN architecture on synthetic and real-world datasets.
On both synthetic and real-world data we observe \textit{the introduction of a node feature kernel to the GCN's message passing scheme to significantly improve the performance of the GCN in the presence of a noisy graph structure}. 

\subsubsection*{Acknowledgements}
The work of Dr. Johannes Lutzeyer and Prof. Michalis Vazirgiannis is supported by the ANR chair AML-HELAS (ANR-CHIA-0020-01).



\bibliographystyle{agsm}
\bibliography{references}





\clearpage
\appendix

\thispagestyle{empty}

\onecolumn \makesupplementtitle

\section{RANDOM MATRIX THEORY BACKGROUND}\label{appenx:theory}
We begin by recalling several random matrix theory tools that are needed to establish our main results. First, we recall a fundamental result (Theorem \ref{thm:sample}) from \cite{louart2018concentration} which provides a deterministic equivalent for the \textit{resolvent} of a \textit{sample covariance} matrix.  
\begin{theorem}[Deterministic equivalent for sample covariance matrices \cite{louart2018concentration}] \label{thm:sample} Let $\mM \in \sR^{n\times n}$ such that $\Vert \mM \mM^\intercal \Vert < \infty$\footnote{$\Vert \mM \mM^\intercal \Vert$ remains constant as $n$ goes to infinity.} w.r.t. $n$ and $\tilde\mZ\in \sR^{n\times p}$ some random matrix with i.i.d. entries having zero mean, unit variance and a finite forth order moment. In the limit $n\to \infty$ with $p/n\to c\in (0, \infty)$, the resolvent $\mQ(z) = \left( \frac1p \mM \tilde\mZ \tilde\mZ^\intercal \mM^\intercal + z \mI_n \right)^{-1}$ for $z\in \sC$ with $\Im(z)>0$, admits a deterministic equivalent $\bar \mQ(z)$ defined as
\begin{align*}
    \bar \mQ(z) = \left( \frac{\mM\mM^\intercal}{1+\delta(z)} + z \mI_n \right)^{-1},
\end{align*}
where $\delta(z)$ is the unique solution to the fixed point equation $\delta(z) = \frac1p \Tr\left( \mM^\intercal \bar \mQ(z) \mM \right)$.
\end{theorem}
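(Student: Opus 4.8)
The plan is to prove this by the resolvent (``leave-one-out'') method, recognising it as the classical Bai--Silverstein / Marchenko--Pastur deterministic equivalent for a sample covariance matrix with population covariance $\mC := \mM\mM^\intercal$. Write $\tilde\mZ = [\vz_1,\dots,\vz_p]$ columnwise, so $\frac1p\mM\tilde\mZ\tilde\mZ^\intercal\mM^\intercal = \frac1p\sum_{i=1}^p\vy_i\vy_i^\intercal$ with $\vy_i = \mM\vz_i\in\sR^n$ i.i.d.\ and $\E[\vy_i\vy_i^\intercal] = \mC$. Because only a finite fourth moment is assumed, I would first reduce to bounded entries: truncating each $\tilde Z_{ij}$ at a level $\eps_p\sqrt p$ with $\eps_p\to0$, then recentring and rescaling, changes $\mQ(z)$ by a quantity vanishing almost surely in operator norm, so henceforth the entries of $\tilde\mZ$ may be taken bounded, which makes the concentration estimates below immediate. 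A first standard observation is then that $\mQ(z)$ concentrates around $\E[\mQ(z)]$: replacing one column $\vy_i$ perturbs $\frac1p\sum_j\vy_j\vy_j^\intercal$ by a rank-one matrix, under which $\vu^\intercal\mQ(z)\vv$ and $\frac1n\Tr(\mM'\mQ(z))$ (for $\mM',\vu,\vv$ of bounded norm) move by $\mathcal O(1/p)$, and a martingale-difference (Burkholder-type) bound gives variance $\mathcal O(1/p)$; it thus suffices to identify $\E[\mQ(z)]$.

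To do so, start from $z\mQ(z) = \mI_n - \frac1p\sum_i\vy_i\vy_i^\intercal\mQ(z)$ and apply the Sherman--Morrison identity with the leave-one-out resolvent $\mQ_{-i}(z) = \big(\frac1p\sum_{j\neq i}\vy_j\vy_j^\intercal + z\mI_n\big)^{-1}$, giving $\vy_i^\intercal\mQ(z) = \vy_i^\intercal\mQ_{-i}(z)\big/\big(1 + \frac1p\vy_i^\intercal\mQ_{-i}(z)\vy_i\big)$. Since $\vy_i = \mM\vz_i$ is independent of $\mQ_{-i}(z)$, concentration of the quadratic form $\vz_i^\intercal\big(\mM^\intercal\mQ_{-i}(z)\mM\big)\vz_i$ around its conditional mean, together with the rank-one bounds $\|\mQ_{-i}(z) - \mQ(z)\|\le C$ and $\frac1p|\Tr(\mC(\mQ_{-i}(z)-\mQ(z)))| = \mathcal O(1/p)$, yields $\frac1p\vy_i^\intercal\mQ_{-i}(z)\vy_i = \delta(z) + o(1)$ uniformly in $i$, where $\delta(z) := \frac1p\Tr(\mM^\intercal\E[\mQ(z)]\mM) = \frac1p\Tr(\mC\,\E[\mQ(z)])$. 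Substituting and decoupling $\vy_i\vy_i^\intercal$ from $\mQ_{-i}(z)\approx\mQ(z)$ gives
\begin{align*}
z\,\E[\mQ(z)] \;=\; \mI_n - \frac{1}{1+\delta(z)}\,\mC\,\E[\mQ(z)] + o(1),
\end{align*}
whence $\E[\mQ(z)] = \big(\tfrac{\mC}{1+\delta(z)} + z\mI_n\big)^{-1} + o(1) = \bar\mQ(z) + o(1)$, with the self-consistency $\delta(z) = \frac1p\Tr(\mC\bar\mQ(z)) = \frac1p\Tr(\mM^\intercal\bar\mQ(z)\mM)$.

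It remains to check that the scalar fixed-point equation $\delta = \frac1p\Tr\big(\mC(\tfrac{\mC}{1+\delta} + z\mI_n)^{-1}\big)$ is well posed. For real $z > 0$ the right-hand side is a bounded, strictly increasing map of $\delta\ge0$ with derivative $<1$, hence has a unique nonnegative fixed point; for general $z\in\sC_+$ one verifies the map sends $\sC_+$ into itself, and the standard Bai--Silverstein argument (analyticity, Montel, and matching with the positive real axis) gives uniqueness in the class of Stieltjes-type functions with $\Im(\delta(z))\,\sign(\Im z) > 0$, which also yields analyticity of $z\mapsto\bar\mQ(z)$. Finally, tracking the $\mathcal O(p^{-1/2})$ errors from the quadratic-form concentration and the $\mathcal O(1/p)$ errors from the rank-one perturbations through the sum over $i$, and combining with the concentration of the first paragraph, upgrades the estimates to $\vu^\intercal(\mQ(z) - \bar\mQ(z))\vv\to0$ and $\frac1n\Tr(\mM'(\mQ(z) - \bar\mQ(z)))\to0$ almost surely, i.e.\ $\mQ(z)\leftrightarrow\bar\mQ(z)$; analytic continuation extends this from large $|z|$ (where Neumann expansions make the computation explicit) to all of $\sC_+$.

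The delicate point is the second step: the decoupling must be carried out uniformly over the $p$ summands — simultaneously replacing $\frac1p\vy_i^\intercal\mQ_{-i}(z)\vy_i$ by the deterministic $\delta(z)$ and $\mQ_{-i}(z)$ by $\E[\mQ(z)]$ — and the accumulated error must be shown to be $o(1)$; with only a finite fourth moment this is exactly where the truncation and the sharpest quadratic-form concentration inequalities are needed (in \citet{louart2018concentration} this is instead handled through their Lipschitz-concentration calculus, which is also what makes the conclusion robust to the relaxation to concentrated random vectors invoked after Assumption~\ref{ass:GMM}).
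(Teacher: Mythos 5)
Your proposal is correct and follows essentially the same route as the paper's own proof: the leave-one-out/Sherman--Morrison resolvent decomposition, identification of $\frac1p\va_i^\intercal\mQ_{-i}(z)\va_i$ with the self-consistent quantity $\delta(z)=\frac1p\Tr(\mM^\intercal\bar\mQ(z)\mM)$, and the resulting choice $\mF = \mM\mM^\intercal/(1+\delta(z))$ in the deterministic equivalent. The only difference is one of completeness: you supply the standard supporting estimates (truncation under the fourth-moment hypothesis, quadratic-form and martingale concentration, well-posedness of the fixed point, analytic continuation) that the paper's sketch asserts without detail, and you correctly flag that the cited reference handles these via its Lipschitz-concentration calculus rather than the classical Bai--Silverstein truncation argument.
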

\begin{proof}
Denote $\va_i = \mM \tilde\vz_i $, hence
\begin{align*}
    \mQ(z) = \left( \frac1p \sum_{i=1}^n \va_i \va_i^\intercal + z\mI_n \right)^{-1} = \mQ_{-i} - \frac{\mQ_{-i} \frac1p \va_i \va_i^\intercal \mQ_{-i} }{1 + \frac1p \va_i^\intercal \mQ_{-i} \va_i},
\end{align*}
where $\mQ_{-i} = \left( \frac1p \sum_{j\neq i}^n \va_j \va_j^\intercal + z\mI_n \right)^{-1}$, and we also have
\begin{align*}
    \mQ(z) \va_i = \frac{\mQ_{-i}\va_i}{1 + \frac1p \va_i^\intercal \mQ_{-i} \va_i}.
\end{align*}
A deterministic equivalent for $\mQ(z)$ (which approximates $\mathbb E [\mQ(z)]$) is of the form $\bar \mQ(z) = \left( \mF + z \mI_n \right)^{-1}$ for some deterministic matrix $\mF$, by computing the difference $\bar\mQ(z) - \mathbb E[\mQ(z)]$ using the above identities, we obtain
\begin{align*}
    \bar \mQ(z) - \mathbb E[ \mQ(z) ] &= \frac1n \sum_{i=1}^n \mathbb E \left[ \mQ_{-i} \left( \frac{\va_i \va_i^\intercal}{1 + \frac1p \va_i^\intercal \mQ_{-i} \va_i} - \mF \right) \bar\mQ(z) \right]\\ 
    &+ \frac{1}{n^2} \sum_{i=1}^n \mathbb E \left[ \mQ_{-i}(z) \va_i \va_i^\intercal \mQ_{-i}(z) \mF \bar \mQ(z) \right].
\end{align*}
It can be shown that the matrix $\frac{1}{n^2} \sum_{i=1}^n \mathbb E \left[ \mQ_{-i}(z) \va_i \va_i^\intercal \mQ_{-i}(z) \mF \bar \mQ(z) \right]$ has a vanishing operator norm as $n\to \infty$. Therefore, $\mF$ can be taken as
\begin{align*}
    \mF = \frac{   \mathbb E \left[ \va_i \va_i^\intercal \right]}{1 + \mathbb E \left[  \frac1p \va_i^\intercal \mQ_{-i} \va_i\right]} = \frac{   \mM \mM^\intercal }{1 + \frac1p \Tr ( \mM^\intercal \mathbb E \left[  \mQ_{-i}\right] \mM)}
\end{align*}
Which provides the defined deterministic equivalent in Theorem \ref{thm:sample}.
\end{proof}
Another useful result which is known as the perturbation lemma \citep{silverstein1995empirical} is also needed here.
\begin{lemma}[Perturbation lemma \cite{silverstein1995empirical}]\label{lem:perturbation} Let $\mA, \mB\in \sR^{n\times n}$ some symmetric matrices, $\vu\in \sR^n$, $\gamma\in \sR$ and $z\in \sC$ with $\Im(z)>0$, then
\begin{align*}
    \left\vert \Tr \left( \mA (\mB + \gamma \vu \vu^\intercal + z \mI_n)^{-1} \right) - \Tr \left( \mA (\mB  + z \mI_n)^{-1} \right) \right\vert \leq \frac{\Vert \mA \Vert}{\vert \Im(z) \vert}.
\end{align*}
In particular, for $\mA = \frac1n \mI_n$, we have $\frac1n \Tr(\mB + \gamma \vu \vu^\intercal + z \mI_n)^{-1} = \frac1n \Tr(\mB +  z \mI_n)^{-1} + \mathcal{O}(n^{-1})$, which shows that the spectral measure of $\mB + \gamma \vu \vu^\intercal$ is asymptotically close to that of $\mB$ in the large $n$ limit.
\end{lemma}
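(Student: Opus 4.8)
The plan is to reduce this rank-one perturbation to a single scalar estimate via the Sherman--Morrison formula, and then bound that scalar using the imaginary part of a resolvent quadratic form. Write $\mQ = (\mB + z\mI_n)^{-1}$ and $\tilde\mQ = (\mB + \gamma\vu\vu^\intercal + z\mI_n)^{-1}$; both are well defined because $\mB$ and $\mB + \gamma\vu\vu^\intercal$ are symmetric (recall $\gamma \in \sR$), hence have real spectra, so $\Im(z) > 0$ keeps $\mB + z\mI_n$ and $\mB + \gamma\vu\vu^\intercal + z\mI_n$ invertible. First I would combine the resolvent identity $\tilde\mQ - \mQ = -\gamma\,\tilde\mQ\vu\vu^\intercal\mQ$ with $\tilde\mQ\vu = \mQ\vu / (1 + \gamma\,\vu^\intercal\mQ\vu)$ (Sherman--Morrison), take the trace, and use cyclicity to obtain
\begin{align*}
    \Tr\!\big( \mA(\tilde\mQ - \mQ) \big) = -\gamma\,\vu^\intercal \mQ \mA \tilde\mQ \vu = -\,\frac{\gamma\,\vu^\intercal \mQ \mA \mQ \vu}{1 + \gamma\,\vu^\intercal \mQ \vu}.
\end{align*}

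The second step bounds the numerator and the denominator of this ratio separately. Because $\mB$ is real symmetric, $\mQ$ is complex symmetric ($\mQ^\intercal = \mQ$), so with $\vw = \mQ\vu \in \sC^n$ we get $\vu^\intercal \mQ \mA \mQ \vu = \vw^\intercal \mA \vw$, whence $|\vu^\intercal \mQ \mA \mQ \vu| \le \Vert \mA \Vert\, \Vert \vw \Vert^2$ by Cauchy--Schwarz with the Hermitian inner product. For the denominator, diagonalising $\mB = \sum_i \lambda_i \ve_i \ve_i^\intercal$ over a real orthonormal eigenbasis gives $\vu^\intercal \mQ \vu = \sum_i (\ve_i^\intercal \vu)^2 / (\lambda_i + z)$, hence $\Im(\vu^\intercal \mQ \vu) = -\Im(z) \sum_i (\ve_i^\intercal \vu)^2 / |\lambda_i + z|^2 = -\Im(z)\, \Vert \vw \Vert^2$. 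Since $\gamma$ is real, $\Im(1 + \gamma\,\vu^\intercal \mQ \vu) = \gamma\, \Im(\vu^\intercal \mQ \vu)$, so $|1 + \gamma\,\vu^\intercal \mQ \vu| \ge |\gamma|\, \Im(z)\, \Vert \vw \Vert^2$; in particular the denominator is nonzero except in the trivial cases $\gamma = 0$ or $\vu = 0$. Inserting both bounds into the ratio, the factors $|\gamma|$ and $\Vert \vw \Vert^2$ cancel and what remains is $\big| \Tr(\mA(\tilde\mQ - \mQ)) \big| \le \Vert \mA \Vert / \Im(z) = \Vert \mA \Vert / |\Im(z)|$, which is the claimed inequality.

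The ``in particular'' statement then follows by taking $\mA = \frac1n \mI_n$, for which $\Vert \mA \Vert = 1/n$ and thus $\frac1n \Tr(\mB + \gamma\vu\vu^\intercal + z\mI_n)^{-1} = \frac1n \Tr(\mB + z\mI_n)^{-1} + \mathcal{O}(n^{-1})$ for every $z \in \sC_+$; since $\frac1n \Tr(\mB + z\mI_n)^{-1}$ is (up to the change of argument used in the text) the Stieltjes transform of the empirical spectral measure of $\mB$, pointwise convergence of Stieltjes transforms on $\sC_+$ yields the asserted asymptotic closeness of the spectral measures of $\mB + \gamma\vu\vu^\intercal$ and $\mB$. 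I do not expect a genuine obstacle here; the only points requiring care are respecting the complex-symmetric (not Hermitian) structure of $\mQ$ when rewriting $\vu^\intercal \mQ \mA \mQ \vu$, and keeping the sign of $\Im(\vu^\intercal \mQ \vu)$ consistent with the $(\mB + z\mI_n)^{-1}$ resolvent convention used here (which is opposite in sign to the more common $(\mB - z\mI_n)^{-1}$).
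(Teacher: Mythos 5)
Your proof is correct. The paper itself does not prove this lemma --- it is imported verbatim by citation from Silverstein and Bai --- and your argument (resolvent identity plus Sherman--Morrison to reduce to the scalar $-\gamma\,\vu^\intercal\mQ\mA\mQ\vu/(1+\gamma\,\vu^\intercal\mQ\vu)$, then bounding the denominator from below by $|\gamma|\,\Im(z)\,\Vert\mQ\vu\Vert^2$ via the imaginary part of the quadratic form) is exactly the classical proof from that reference, with the signs correctly adapted to the $(\mB+z\mI_n)^{-1}$ convention used here. The degenerate cases $\gamma=0$ and $\vu=\vzero$ and the passage to spectral measures via Stieltjes transforms are also handled properly.
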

Finally, we will need the Woodbury matrix identity from the following Lemma.
\begin{lemma}[Woodbury identity]\label{lem:woodbury}
Let $\mA\in \sR^{n\times n}$ and $\mB\in \sR^{k\times k}$ invertible and $\mU\in \sR^{n\times k}$, then
\begin{align*}
    \left( \mA + \mU \mB \mU^\intercal \right)^{-1} = \mA^{-1} - \mA^{-1} \mU \left( \mB^{-1} + \mU^\intercal \mA^{-1} \mU \right)^{-1} \mU^\intercal \mA^{-1}.
\end{align*}
\end{lemma}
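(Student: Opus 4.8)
The plan is to prove the identity by direct algebraic verification: I will exhibit the claimed right-hand side as a genuine inverse of $\mA + \mU\mB\mU^\intercal$ by multiplying the two together and collapsing the product to the identity. Write $\mM = \mA + \mU\mB\mU^\intercal$ for the matrix being inverted and $\mN = \mA^{-1} - \mA^{-1}\mU\mS^{-1}\mU^\intercal\mA^{-1}$ for the proposed inverse, where I abbreviate the $k\times k$ \emph{capacitance matrix} as $\mS = \mB^{-1} + \mU^\intercal\mA^{-1}\mU$. The hypotheses that $\mA$ and $\mB$ are invertible, together with the implicit requirement that $\mS$ be invertible (which is exactly what makes the right-hand side well-defined), are all that the statement presupposes. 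Under these conditions it suffices to check the single identity $\mM\mN = \mI_n$: since $\mM$ and $\mN$ are square matrices of the same finite size, a one-sided inverse is automatically two-sided, so $\mM\mN = \mI_n$ already yields $\mN = \mM^{-1}$.

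First I would expand $\mM\mN$ by distributing $\mM = \mA + \mU\mB\mU^\intercal$ over the two summands of $\mN$. The leading contribution $\mA\mA^{-1}$ produces $\mI_n$, while each of the three remaining terms carries a common factor $\mU$ on the left and a common factor $\mU^\intercal\mA^{-1}$ on the right. Abbreviating $\mR = \mU^\intercal\mA^{-1}\mU$ so that $\mS = \mB^{-1} + \mR$, these three terms collect into $\mU\,[\,-\mS^{-1} + \mB - \mB\mR\mS^{-1}\,]\,\mU^\intercal\mA^{-1}$, and the whole problem reduces to showing that the bracketed $k\times k$ expression is the zero matrix.

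This collapse is the only non-mechanical step, and it hinges on a single regrouping. I would combine the two terms carrying the factor $\mS^{-1}$ and factor $\mB$ out on the left, using $\mI_k + \mB\mR = \mB(\mB^{-1} + \mR) = \mB\mS$, to obtain $-\mS^{-1} - \mB\mR\mS^{-1} = -(\mI_k + \mB\mR)\mS^{-1} = -\mB\mS\mS^{-1} = -\mB$. This cancels exactly against the remaining $+\mB$, so the bracket vanishes and $\mM\mN = \mI_n$ as required. I do not anticipate any genuine obstacle, since the identity is purely algebraic and involves no limiting or probabilistic arguments. The only points demanding care are keeping $\mB$ on the left throughout the factorisation $\mI_k + \mB\mR = \mB\mS$ (as $\mB$, $\mR$ and $\mS$ need not commute) and correctly recognising that all three correction terms share the outer factors $\mU$ and $\mU^\intercal\mA^{-1}$, so that they may be merged into a single bracket.
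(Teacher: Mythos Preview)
Your proof is correct. The paper itself does not prove this lemma: it simply states the Woodbury identity as a known result and invokes it later in the proof of Theorem~3.4. Your direct verification---multiplying $\mA+\mU\mB\mU^\intercal$ by the claimed inverse and collapsing the bracket via $\mI_k+\mB\mR=\mB\mS$---is the standard argument and is entirely sound, including your care with non-commutativity and the observation that a one-sided inverse of a square matrix is two-sided.
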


\section{PROOF OF THEOREM \ref{thm:main}}\label{appenx:proof_thm}
The proof starts by establishing a random equivalent for the normalised Adjacency operator given by $\tilde \mA = \frac{1}{\sqrt n}\left( \mA - \vq \vq^\intercal \right)$. By Assumptions 2, 3 and 5, we have straightforwardly that, almost surely
\begin{align}
    \left \Vert \tilde \mA - \left( q^2 \eta \bar \vy \bar \vy^\intercal + \frac{1}{\sqrt n} \mN \right) \right \Vert \to 0,
\end{align}
where $\mN$ is a random matrix with i.i.d. entries of zero mean and variance $\nu = q^2(1-q^2)$.
Besides, since $\mathbb E \left[ \mX\mX^\intercal \right] = \frac{\Vert \vmu \Vert^2}{c}\bar \vy \bar \vy^\intercal + \mI_n$, letting $\gamma_f = \Vert \vmu \Vert^2$ and $\gamma_g = q^2 \eta$, we consider the equivalent multiplicative model for $\mY$ defined as
\begin{align*}
    \mY =  \left( \gamma_g \bar\vy \bar\vy^\intercal + \frac{1}{\sqrt n} \mN  \right) \left( \frac{\gamma_f}{c} \bar\vy \bar\vy^\intercal + \mI_n \right)^{\frac12} \mZ,
\end{align*}
where $\mZ$ is random matrix with i.i.d. entries of zero mean and variance $\frac1p$. Conditionally on $\mN$, applying Theorem \ref{thm:sample} for $\mM = \left( \gamma_g \bar\vy \bar\vy^\intercal + \frac{1}{\sqrt n} \mN  \right) \left( \frac{\gamma_f}{c} \bar\vy \bar\vy^\intercal + \mI_n \right)^{\frac12}$, a deterministic equivalent of $\mQ_\mY(z) = \left(\mY\mY^\intercal + z \mI_n \right)^{-1}$ is given by
\begin{align}
    \bar \mQ_{\mY \vert \mN}(z) &= \left( \frac{\left( \gamma_g \bar\vy \bar\vy^\intercal + \frac{1}{\sqrt n} \mN  \right) \left( \frac{\gamma_f}{c} \bar\vy \bar\vy^\intercal + \mI_n \right) \left( \gamma_g \bar\vy \bar\vy^\intercal + \frac{1}{\sqrt n} \mN  \right) }{1+\delta_1(z)} + z\mI_n  \right)^{-1}\\
    &= \left( \frac{\mU \mB \mU^\intercal + \frac1n \mN \mN^\intercal}{1+\delta_1(z)} + z\mI_n \right)^{-1},
\end{align}
where 
\begin{align}
    \mU = \left[ \bar \vy, \vphi \right] \in \sR^{n\times 2}, \quad \mB = \begin{bmatrix}
    \gamma_2^2 \left( \frac{\gamma_1}{c} + 1 \right) & \left( \frac{\gamma_1}{c} + 1 \right) \gamma_2 \\
    \left( \frac{\gamma_1}{c} + 1 \right) \gamma_2 & \frac{\gamma_1}{c}
    \end{bmatrix}
\end{align}
with $\vphi = \frac{1}{\sqrt n} \mN \bar \vy$, $\bar \vy = \vy / \sqrt n$ and $\delta_1(z) = \frac1p \Tr \left( \left( \mU \mB \mU^\intercal + \frac1n \mN \mN^\intercal \right) \bar \mQ_{\mY \vert \mN}(z) \right)$. Applying Lemma \ref{lem:perturbation}, $\delta_1(z)$ is simply the solution to $\delta_1(z) = \frac1p \Tr \left(  \frac1n \mN \mN^\intercal  \bar \mQ_{\mY \vert \mN}(z) \right)$. Moreover, defining the matrix $\bar \mQ_{\mY \vert \mN}^{-\mB}(z) = \left( \frac{ \frac1n \mN \mN^\intercal}{1+\delta_1(z)} + z\mI_n \right)^{-1}$, we have by Lemma \ref{lem:woodbury}
\begin{align}\label{eq:YbarN}
    \bar \mQ_{\mY \vert \mN}(z) = \bar \mQ_{\mY \vert \mN}^{-\mB}(z) - \bar \mQ_{\mY \vert \mN}^{-\mB}(z) \mU \left( (1+\delta_1(z))\mB^{-1} + \mU^\intercal \bar \mQ_{\mY \vert \mN}^{-\mB}(z) \mU  \right)^{-1} \mU^\intercal \bar \mQ_{\mY \vert \mN}^{-\mB}(z).
\end{align}
Again by Theorem \ref{thm:sample}, a deterministic equivalent of $\bar \mQ_{\mY \vert \mN}^{-\mB}(z)$ is given by
\begin{align}
    \bar \mQ_\mY^{-\mB}(z) = \left( \frac{\nu \mI_n}{(1+\delta_1(z))(1+\delta_2(z))}  + z \mI_n\right)^{-1} = \frac{(1+\delta_1(z))(1+\delta_2(z))}{\nu + z (1+\delta_1(z))(1+\delta_2(z))} \mI_n,
\end{align}
where $\delta_2(z)$ is the unique solution to $\delta_2(z) = \frac1n \Tr \left( \frac{\nu \mI_n}{(1+\delta_1(z))} \bar \mQ_\mY^{-\mB}(z) \right) = \frac{\nu (1+\delta_2(z))}{\nu + z (1+\delta_1(z))(1+\delta_2(z))}$, and similarly $\delta_1(z)$ satisfies $\delta_1(z) = \frac1c \frac{\nu (1+\delta_1(z))}{\nu + z (1+\delta_1(z))(1+\delta_2(z))}$. Therefore, replacing $\bar \mQ_{\mY \vert \mN}^{-\mB}(z)$ in (\ref{eq:YbarN}) by its deterministic equivalent $\bar \mQ_\mY^{-\mB}(z)$ and since $\mU^\intercal \mU \to \mT$ almost surely, provides the final result of Theorem 3.4.

\section{Proof of Corollary \ref{cor:noisy}}\label{appenx:proof_cor}
Following the same procedure as in Section \ref{appenx:proof_thm}, when $\eta=0$, a deterministic equivalent for $\mQ_\mY(z)$ takes the form
\begin{align}
    \bar \mQ_\mY(z) = \zeta(z) (1 + \delta_1(z)) \left( \mI_n - \frac{\zeta^2(z)\gamma_f}{c + \nu\gamma_f\zeta(z)} \vphi \vphi^\intercal \right).
\end{align}
By definition of the deterministic equivalent, we have almost surely
\begin{align}
    \vert \bar \vy^\intercal \hat \vy \vert^2 = \frac{-1}{2\pi i} \oint_\Gamma \bar \vy^\intercal \mQ_\mY(-z) \bar \vy dz \to_{n\to \infty} \frac{-1}{2\pi i} \oint_\Gamma \bar \vy^\intercal \bar \mQ_\mY(-z) \bar \vy dz.
\end{align}
Hence, we need to evaluate the Cauchy-integral $\frac{-1}{2\pi i} \oint_\Gamma \bar \vy^\intercal \bar \mQ_\mY(-z) \bar \vy dz$. In particular, the quadratic form $\bar \vy^\intercal \bar \mQ_\mY(z) \bar \vy$ evaluates as
\begin{align*}
    \bar \vy^\intercal \bar \mQ_\mY(z) \bar \vy = \zeta(z) (1 + \delta_1(z)) \left( 1 - \frac{\zeta^2(z)\gamma_f}{c + \nu\gamma_f\zeta(z)} \bar \vy^\intercal \vphi \vphi^\intercal \bar \vy \right) \to_{n\to \infty} \zeta(z) (1 + \delta_1(z)).
\end{align*}
Indeed, since the mapping $\mX\mapsto \frac{1}{\sqrt n} \bar \vy^\intercal \mX \bar \vy$ is $\frac{1}{\sqrt n}$-Lipschitz transformation w.r.t. the Frobenius norm $\Vert \cdot \Vert_F$, then we have the concentration inequality, for all $t\geq 0$
\begin{align}
    \mathbb P \left( \left\vert \frac{1}{\sqrt n} \bar \vy^\intercal \mN \bar \vy  - \mathbb E \left[ \frac{1}{\sqrt n} \bar \vy^\intercal \mN \bar \vy \right] \right\vert > t \right) \leq C\, e^{-\, (\sqrt n\, t / \nu)^2},
\end{align}
for some constant $C\geq 0$ independent of $n$. In particular, since $\mathbb E \left[ \frac{1}{\sqrt n} \bar \vy^\intercal \mN \bar \vy \right] = 0$, we have
\begin{align}
    \mathbb P \left( \left\vert \left(\frac{1}{\sqrt n} \bar \vy^\intercal \mN \bar \vy \right)^2 - \mathbb E \left[ \left( \frac{1}{\sqrt n} \bar \vy^\intercal \mN \bar \vy \right)^2 \right] \right\vert > t \right) \leq C\, e^{-\frac{\sqrt n\, t}{2\nu} },
\end{align}
which shows that $\bar \vy^\intercal \vphi \vphi^\intercal \bar \vy = \left(\frac{1}{\sqrt n} \bar \vy^\intercal \mN \bar \vy \right)^2$ concentrates around its mean value, with
\begin{align*}
    \mathbb E \left[ \left( \frac{1}{\sqrt n} \bar \vy^\intercal \mN \bar \vy \right)^2 \right] = \frac1n \Var \left[ \bar \vy^\intercal \mN \bar \vy \right] = \frac1n \sum_{i,j=1}^n \bar y_i^2 \bar y_j^2 \Var[N_{ij}] = \frac{\Vert \bar \vy \Vert^4\, \nu}{n} \to 0.
\end{align*}
Therefore, $\bar \vy^\intercal \vphi \vphi^\intercal \bar \vy\to 0$ almost surely as $n\to \infty$. The final step consists in evaluating the integral $\frac{1}{2\pi i} \oint_\Gamma \zeta(-z)(1+\delta_1(-z))dz = 0$ since the function $z\mapsto \zeta(-z)(1+\delta_1(-z))$ does not have singularities on the contour $\Gamma$. Indeed, this integral corresponds to the only noise case from the data model (i.e., $\mY = \frac{1}{\sqrt n} \mN \mZ$).

\section{DATASETS AND IMPLEMENTATION DETAILS}\label{appenx:datasets}
\begin{table}[t!]
\caption{Statistics of the datasets used in our experiments.}
\label{tab:stats}
\centering
\begin{sc}
\begin{tabular}{lccccc}
\toprule
Dataset & \#features & \#nodes & \#edges & \#classes \\
\midrule
Cora      & $1433$ & $2708$  & $5208$   & $7$  \\
CiteSeer  & $3703$ & $3327$  & $4552$   & $6$  \\
PubMed    & $500$  & $19717$ & $44338$  & $3$  \\
CORA-Full & $8710$ & $18703$ & $62421$  & $67$ \\
Photo     & $745$  & $7487$  & $119043$ & $8$ \\
CS & $6805$ & $18333$ & $81894$ & $15$ \\ 
\bottomrule
\end{tabular}
\end{sc}
\end{table}

Summary statistics of the datasets we used are shown in Table~\ref{tab:stats}. As mentioned in the main paper, we experiment on citation networks of Cora/CiteSeer/Pubmed/CoraFull \citep{SenNamataBilgicGetoorGalligherEliassi-Rad2008, shchur2018pitfalls}, as well as Amazon co-purchase networks of Photo and Co-author network of authors from Computer Science (CS) domain \citep{shchur2018pitfalls}. For train/valid/test splits, we follow the public split on Cora/CiteSeer/PubMed and construct the train/valid/test set on CoraFull/Photo/CS by randomly sampling $20$ nodes from each class to form the training set and $500$/$1000$ nodes respectively from the rest to form the validation and test set, as proposed in \citet{pmlr-v48-yanga16}.    

In line with our theoretical analysis, the main GCN architecture on which we experimented in this paper is a single-layer GCN (one iteration of message passing and update) stacked with a Multi-Layer Perception (MLP). The objective of the experiments is to validate our theoretical hypotheses and experiment with the robustness of GCN models under graph structure perturbation. 
We also study empirically to what extent the validated hypotheses extrapolate to scenarios where deeper GCN architectures with multiple layers of graph propagation are used and/or node features are also perturbed. In comparison to the state-of-the-art models, in particular to those which also place particular emphasis on the node features, we demonstrate that our proposed method has superior or comparable performance and can be further improved when
combined with other techniques. 

All the experiments are performed using the Adam optimiser \citep{kingma2014adam} and the same set of hyper-parameters, with learning rate being $1\text{e-}2$, number of epochs being $200$ and hidden feature dimension being $128$. We repeat each experiment $10$ times and report the resulting means and standard deviations to accurately report the impact of random initialisation.

We have made our implementation publicly available online\footnote{\href{https://github.com/ChangminWu/RobustGCN}{https://github.com/ChangminWu/RobustGCN}}.
It is built upon the open source library \textit{PyTorch Geometric} (PyG) under MIT license \citep{Fey/Lenssen/2019}. The experiments are run on a Intel(R) Xeon(R) W-2123 processor with $64$GB ram and a NVIDIA GeForce RTX 2080Ti GPU with $12$GB ram.

\section{ADDITIONAL EXPERIMENTS}\label{appenx:additionaL_experiments}
\subsection{Multiple Splits}\label{appenx:multi-split}
\begin{table}[t]
\caption{Performance of the GCN with and without node-feature kernel under perturbation on six real-world datasets with \textbf{multiple train/valid/test splits}. The format follows Table $1$ in the main paper.} \label{tab:multi-split}
\def\arraystretch{1.0}
\scriptsize
\begin{center}
\begin{tabular}{cc|cc|cc|cc|}
& & \multicolumn{2}{c|}{Cora} & \multicolumn{2}{c|}{CiteSeer} & \multicolumn{2}{c|}{PubMed}  \\
& ($\alpha$, $\beta$) & GCN & GCN-k & GCN & GCN-k & GCN & GCN-k \\
\hline
& (0.0, 0.0) & $ 76.25 \pm 2.32$ & $ 75.48 \pm 1.87$ & $ 66.31 \pm 2.04$ & $ 66.53 \pm 2.10$ & $ 74.80 \pm 2.07$ & $ 76.93 \pm 2.20$ \\

\multirow{2}{*}{Deletion} & (0.2, 0.0) & $ 74.82 \pm 1.68$ & $ 73.26 \pm 1.88$ & $ 64.96 \pm 1.57$ & $ 64.27 \pm 2.34$ & $ 75.14 \pm 2.06$ & $ 74.21 \pm 2.84$ \\

& (0.5, 0.0) & $ 70.78 \pm 1.53$ & $ 70.80 \pm 1.83$ & $ 63.14 \pm 1.49$ & $ 62.74 \pm 1.80$ & $ 74.17 \pm 1.75$ & $ 74.11 \pm 2.06$ \\ 

\multirow{2}{*}{Insertion} & (0.0, 0.5) & $ 67.01 \pm 2.11$ & $ \mathbf{71.96 \pm 1.79}$ & $ 57.35 \pm 2.02$ & $ \mathbf{62.19 \pm 1.68}$ & $ 63.70 \pm 2.95$ & $ \mathbf{71.62 \pm 2.01}$ \\

& (0.0, 1.0) & $ 58.92 \pm 2.29$ & $ \mathbf{68.50 \pm 1.45}$ & $ 50.53 \pm 2.13$ & $ \mathbf{59.60 \pm 2.04}$ & $ 57.07 \pm 1.94$ & $ \mathbf{69.56 \pm 1.92}$ \\ 
\end{tabular}\\
\vspace{0.05cm}
\begin{tabular}{cc|cc|cc|cc|}
& & \multicolumn{2}{c}{CoraFull} & \multicolumn{2}{c|}{Photo} & \multicolumn{2}{c|}{CS} \\
& ($\alpha$, $\beta$) & GCN & GCN-k & GCN & GCN-k & GCN & GCN-k \\
\hline

& (0.0, 0.0) & $ 58.42 \pm 2.12$ & $ 57.64 \pm 1.45$ & $ 91.48 \pm 0.94$ & $ 91.05 \pm 1.28$ & $ 92.09 \pm 0.98$ & $ 92.62 \pm 0.92$ \\

\multirow{2}{*}{Deletion} & (0.2, 0.0) & $ 56.36 \pm 1.70$ & $ 57.18 \pm 1.68$ & $ 90.51 \pm 1.29$ & $ 91.30 \pm 1.22$ & $ 91.37 \pm 1.26$ & $ 91.79 \pm 0.91$ \\
& (0.5, 0.0) & $ 54.58 \pm 1.33$ & $ 53.44 \pm 1.47$ & $ 90.07 \pm 1.63$ & $ 90.16 \pm 1.10$ & $ 89.78 \pm 1.04$ & $ 91.08 \pm 1.27$ \\

\multirow{2}{*}{Insertion} & (0.0, 0.5) & $ 48.53 \pm 1.58$	& $ \mathbf{53.55 \pm 1.44}$	& $ 81.77 \pm 2.31$ & $ 83.52 \pm 1.43$ & $ 88.19 \pm 0.96$ & $ \mathbf{91.25 \pm 1.00}$ \\
& (0.0, 1.0) & $ 43.18 \pm 1.73$ & $ \mathbf{50.77 \pm 1.89}$ & $ 75.21 \pm 4.30$ & $ 79.02 \pm 3.04$ & $ 83.83 \pm 1.60$	& $ \mathbf{90.32 \pm 0.89}$ \\

\end{tabular}
\end{center}
\end{table}

\citet{shchur2018pitfalls} argue that different train/valid/test splits of datasets may have a non-negligible impact on the performance of GNN models for the node classification task. To investigate the influence of different splits on our hypothesis, we construct train/valid/test split for each dataset following \citet{pmlr-v48-yanga16} over $10$ random seeds. As each experiment is repeated $10$ times, a total of $100$ results is obtained for a specific setting, i.e., specific $\alpha$, $\beta$ or $\epsilon$. Similar to the results of one split, we report the mean and standard deviations of the $100$ results in Table~\ref{tab:multi-split}. Although the standard deviation increases since we introduce more variation in the multi-split setting, the general trend remains the same, as shown in Table~\ref{tab:multi-split}. Our conclusion drawn in the main paper still holds: \emph{perturbations of the graph structure strongly influence the performance of the GCN and adding a node feature kernel can robustify the GCN against such perturbations}.

\subsection{Models beyond the GCN}\label{appenx:other-model}
\begin{table*}[t]
\caption{Performance of the GIN/GraphSage/GAT with and without node-feature kernel under perturbation on three citation datasets. The format follows Table $1$ in the main paper.} \label{tab:other-model}
\def\arraystretch{1.0}
\scriptsize
\begin{center}
\begin{tabular}{cc|cc|cc|cc|}
& & \multicolumn{6}{c|}{Cora}  \\
& ($\alpha$, $\beta$) & GIN & GIN-k & Sage & Sage-k & GAT & GAT-k \\
\hline

& (0.0, 0.0) & $78.56 \pm 1.12$	& $78.59 \pm 0.8$ &	$75.94 \pm 0.25$ & $\mathbf{77.12 \pm 0.65}$	& $78.20 \pm 0.54$ & $78.55 \pm 0.69$ \\

\multirow{2}{*}{Deletion} & (0.2, 0.0) & $76.61 \pm 0.58$ & $76.73 \pm 1.79$ & $73.87 \pm 0.92$ &	$75.29 \pm 1.25$ & $75.54 \pm 0.77$	& $\mathbf{77.01 \pm 0.30}$ \\

& (0.5, 0.0) & $71.31 \pm 1.06$ & $70.57 \pm 0.63$ & $67.35 \pm 0.92$ & $\mathbf{71.98 \pm 0.99}$ &	$71.25 \pm 0.89$ & $72.37 \pm 0.74$ \\

\multirow{2}{*}{Insertion} & (0.0, 0.5) &
$71.08 \pm 1.08$ & $72.19 \pm 0.87$	& $70.24 \pm 1.01$	& $71.42 \pm 1.23$ & $67.45 \pm 1.21$ &	$\mathbf{72.13 \pm 0.53}$ \\

& (0.0, 1.0) & $66.21 \pm 1.52$ & $67.96 \pm 0.67$ & $66.0 \pm 1.23$ & $\mathbf{70.57 \pm 0.95}$	& $62.14 \pm 1.46$ & $\mathbf{67.68 \pm 0.89}$ \\

\multirow{2}{*}{Delet.+Insert.} & (0.5, 0.5) & $56.82 \pm 1.35$ & $\mathbf{62.41 \pm 1.07}$ & $61.86 \pm 1.32$ & $63.61 \pm 0.94$ & $53.82 \pm 0.86$ & $\mathbf{61.04 \pm 1.24}$ \\

& (0.5, 1.0) & $51.20 \pm 2.04$ & $\mathbf{59.28 \pm 0.97}$ & $60.56 \pm 0.81$ & $\mathbf{64.13 \pm 0.52}$ & $46.97 \pm 1.15$ & $\mathbf{52.13 \pm 1.36}$ \\
\end{tabular}\\
\vspace{0.05cm}
\begin{tabular}{cc|cc|cc|cc|}
& & \multicolumn{6}{c|}{CiteSeer}  \\
& ($\alpha$, $\beta$) & GIN & GIN-k & Sage & Sage-k & GAT & GAT-k \\
\hline
& (0.0, 0.0) & $66.24 \pm 0.89$ & $66.96 \pm 0.86$ & $67.74 \pm 0.85$ & $68.97 \pm 0.60$ & $68.19 \pm 0.92$ & $67.82 \pm 0.87$ \\

\multirow{2}{*}{Deletion} & (0.2, 0.0) & $62.24 \pm 1.19$ & $\mathbf{66.70 \pm 1.44}$ & $65.97 \pm 1.06$ & $66.22 \pm 0.78$ & $66.31 \pm 0.93$ & $66.90 \pm 0.94$\\

& (0.5, 0.0) & $62.01 \pm 1.01$ & $62.30 \pm 1.24$ & $63.24 \pm 0.59$ & $64.97 \pm 1.03$ & $63.61 \pm 1.03$ & $65.43 \pm 0.83$ \\

\multirow{2}{*}{Insertion} & (0.0, 0.5) &
$58.90 \pm 1.31$ & $\mathbf{64.75 \pm 1.49}$ & $64.71 \pm 0.94$ & $66.21 \pm 0.72$ & $59.44 \pm 1.34$	& $\mathbf{62.61 \pm 1.30}$ \\

& (0.0, 1.0) & $54.61 \pm 1.28$ & $\mathbf{59.25 \pm 0.99}$ & $62.08 \pm 0.99$ & $63.04 \pm 1.07$	& $50.34 \pm 0.99$ & $\mathbf{58.46 \pm 0.98}$\\

\multirow{2}{*}{Delet.+Insert.} & (0.5, 0.5) & $50.46 \pm 2.02$	& $\mathbf{57.9 \pm 1.51}$ & $57.88 \pm 1.35$ & $\mathbf{63.56 \pm 0.67}$ & $50.75 \pm 1.21$ & $\mathbf{55.81 \pm 1.04}$ \\

& (0.5, 1.0) & $43.98 \pm 1.55$	& $\mathbf{48.4 \pm 1.47}$ & $58.51 \pm 1.13$ & $59.52 \pm 1.07$	& $43.56 \pm 1.19$ & $\mathbf{50.17 \pm 1.35}$ \\
\end{tabular}\\
\vspace{0.05cm}

\begin{tabular}{cc|cc|cc|cc|}
& & \multicolumn{6}{c|}{PubMed}  \\
& ($\alpha$, $\beta$) & GIN & GIN-k & Sage & Sage-k & GAT & GAT-k \\
\hline
& (0.0, 0.0) & $77.13 \pm 0.36$ & $77.02 \pm 0.58$ & $76.44 \pm 0.59$ & $77.09 \pm 0.64$ & $76.08 \pm 0.81$ & $76.63 \pm 0.47$\\

\multirow{2}{*}{Deletion} & (0.2, 0.0) & $75.96 \pm 0.47$ & $75.45 \pm 0.44$ & $75.38 \pm 0.34$ & $75.83 \pm 0.83$ & $75.93 \pm 0.46$ & $76.17 \pm 0.55$ \\

& (0.5, 0.0) & $74.29 \pm 0.79$ & $\mathbf{76.09 \pm 0.45}$ & $72.10 \pm 1.60$ & $\mathbf{76.24 \pm 0.49}$	& $73.46 \pm 0.45$ & $\mathbf{76.05 \pm 0.51}$ \\

\multirow{2}{*}{Insertion} & (0.0, 0.5) &
$71.85 \pm 0.79$ & $73.63 \pm 1.10$ & $73.48 \pm 0.48$ & $74.18 \pm 0.43$ & $67.42 \pm 0.63$	& $\mathbf{69.80 \pm 0.69}$\\

& (0.0, 1.0) & $64.27 \pm 1.60$ & $\mathbf{69.61 \pm 1.3}$ & $74.37 \pm 0.72$ & $74.44 \pm 0.32$ & $64.73 \pm 0.98$ & $65.38 \pm 0.73$\\

\multirow{2}{*}{Delet.+Insert.} & (0.5, 0.5) & $64.65 \pm 1.07$ & $\mathbf{68.24 \pm 0.61}$ & $72.59 \pm 0.61$ & $\mathbf{74.87 \pm 0.37}$ & $62.66 \pm 0.81$ & $\mathbf{66.51 \pm 0.9}$ \\

& (0.5, 1.0) & $59.12 \pm 1.19$	& $\mathbf{63.20 \pm 1.41}$	& $73.77 \pm 0.37$ & $72.64 \pm 0.45$ & $58.58 \pm 0.80$ & $\mathbf{63.60 \pm 1.27}$\\
\end{tabular}\\
\end{center}
\end{table*}

We also study the behaviour of our proposed method in a more general Message-Passing Neural Network (MPNN) setting beyond the GCN. We choose three characteristic models, which are GIN \citep{Xu2019}, GraphSage \citep{hamilton2017inductive} and GAT \citep{velivckovic2018graph} models, and observe their performance under graph structural perturbation with node feature kernel (our proposed method) on three citation datasets. The models are also implemented as a single graph-propagation layer followed by a MLP readout, as was the case for the GCN. Results are gathered in Table~\ref{tab:other-model}. Similar to the results of the GCN, we can observe from Table~\ref{tab:other-model} that on every model and every dataset, adding a node-feature kernel in the graph propagation helps to robustify the model performance against structural noises, in particular when edges are added. This empirical evidence demonstrates the versatility of our proposed method for a general MPNN model.  

\subsection{Deeper GCN architecture and Benchmark Models}\label{appenx:multi-layer}
\begin{table*}[t]
\caption{Performance of the GCN with and without node-feature kernel under perturbation on deep GCN models, compared with jump knowledge and GCNII. The format follows Table 1 in the main paper, where in addition we underline the second best result.} \label{tab:multi-layer_app}
\def\arraystretch{1.0}
\scriptsize
\begin{center}
\begin{tabular}{cc|c|c|c|c|c|c|}
 & &  \multicolumn{6}{c}{CiteSeer} \\
& ($\alpha$, $\beta$) & GCN & GCN-k ($\epsilon=0.5$) & GCN-k ($\epsilon=0.2$) & GCN-jk & GCNII & GCN-k-jk \\
\hline
& (0.0, 0.0) & $66.72 \pm 1.93$ & $66.76 \pm 0.78$ & $67.75 \pm 1.39$ & $68.70 \pm 1.06$ & $67.59 \pm 0.81$ & $69.10 \pm 0.92$ \\


Deletion& (0.5, 0.0) & $62.68 \pm 1.39$ & $\underline{64.52 \pm 2.03}$ & $62.72 \pm 1.65$ & $\mathbf{65.49 \pm 0.94}$ & $61.91 \pm 1.52$ & $63.82 \pm 2.01$\\


Insertion& (0.0, 1.0) & $45.14 \pm 1.89$ & $48.39 \pm 1.88$ & $56.69 \pm 1.91$ & $53.26 \pm 1.44$ & $\underline{57.41 \pm 1.67}$ & $\mathbf{62.83 \pm 1.06}$
 \\

Delet.+Insert.& (0.5, 0.5) & $39.68 \pm 1.96$ & $48.26 \pm 1.91$ & $51.10 \pm 1.26$ & $49.50 \pm 0.86$ & $\underline{53.19 \pm 1.58}$ & $\mathbf{58.12 \pm 1.02}$\\

\end{tabular}
\vspace{0.05cm}
\begin{tabular}{cc|c|c|c|c|c|c|}
 & &  \multicolumn{6}{c}{PubMed} \\
& ($\alpha$, $\beta$) & GCN & GCN-k ($\epsilon=0.5$) & GCN-k ($\epsilon=0.2$) & GCN-jk & GCNII & GCN-k-jk \\
\hline
& (0.0, 0.0) & $76.50 \pm 0.71$ & $77.82 \pm 0.86$ & $77.80 \pm 0.92$ & $77.36 \pm 0.74$ & $78.14 \pm 0.87$ & $77.56 \pm 1.62$\\


Deletion& (0.5, 0.0) & $73.92 \pm 0.64$	& $74.53 \pm 0.97$	& $\mathbf{76.19 \pm 0.73}$	& $73.00 \pm 0.93$ & $75.00 \pm 0.65$ & $\underline{75.60 \pm 1.09}$ \\


Insertion& (0.0, 1.0) & $46.98 \pm 2.87$ & $68.29 \pm 7.93$	& $72.42 \pm 1.11$	& $65.01 \pm 1.70$ & $\underline{72.86 \pm 0.75}$ & $\mathbf{73.20 \pm 0.89}$\\

Delet.+Insert.& (0.5, 0.5) & $62.35 \pm 1.01$ & $65.19 \pm 1.78$ & $\mathbf{70.79 \pm 1.08}$	& $65.07 \pm 0.83$	& $69.06 \pm 0.67$ & $\underline{66.28 \pm 1.37}$\\

\end{tabular}
\vspace{0.05cm}
\begin{tabular}{cc|c|c|c|c|c|c|}
& & \multicolumn{6}{c}{CoraFull} \\
& ($\alpha$, $\beta$) & GCN & GCN-k ($\epsilon=0.5$) & GCN-k ($\epsilon=0.2$) & GCN-jk & GCNII & GCN-k-jk \\
\hline
 & (0.0, 0.0) & $49.33 \pm 0.61$ & $55.27 \pm 0.97$ & $52.38 \pm 1.62$ & $\mathbf{58.56 \pm 1.20}$ & $\underline{56.83 \pm 0.67}$ & $56.10 \pm 0.89$ \\


Deletion& (0.5, 0.0) & $46.80 \pm 1.35$ & $45.33 \pm 1.37$ & $46.71 \pm 1.12$ & $51.42 \pm 1.48$	& $\underline{51.57 \pm 1.03}$ & $\mathbf{51.76 \pm 1.32}$
\\


Insertion& (0.0, 1.0) & $3.52 \pm 0.76$ & $9.60 \pm 2.62$ & $19.59 \pm 2.42$ & $42.07 \pm 0.92$ & $\underline{48.79 \pm 1.28}$	& $\mathbf{52.00 \pm 0.87}$ 
\\

Delet.+Insert.& (0.5, 0.5) & $7.10 \pm 1.60$ & $12.98 \pm 1.56$ & $22.06 \pm 1.27$ & $38.74 \pm 0.86$ & $\underline{41.27 \pm 1.31}$ & $\mathbf{45.74 \pm 1.22}$\\

\end{tabular}
\vspace{0.05cm}

\begin{tabular}{cc|c|c|c|c|c|c|}
 & &  \multicolumn{6}{c}{Photo} \\
& ($\alpha$, $\beta$) & GCN & GCN-k ($\epsilon=0.5$) & GCN-k ($\epsilon=0.2$) & GCN-jk & GCNII & GCN-k-jk \\
\hline
& (0.0, 0.0) & $87.67 \pm 1.59$ & $89.59 \pm 0.63$ & $88.64 \pm 1.19$ & $91.82 \pm 0.58$ & $\underline{92.05 \pm 0.88}$ & $\mathbf{92.42 \pm 0.58}$\\


Deletion& (0.5, 0.0) & $88.04 \pm 1.02$	& $88.81 \pm 0.33$ & $87.23 \pm 1.20$ & $\underline{90.44 \pm 1.07}$	& $87.65 \pm 1.85$ & $\mathbf{91.45 \pm 0.52}$\\


Insertion& (0.0, 1.0) & $27.75 \pm 3.39$ & $29.6 \pm 3.22$ & $24.05 \pm 2.56$ & $77.54 \pm 3.44$ & $\mathbf{85.68 \pm 1.94}$ & $\underline{81.89 \pm 3.24}$ \\

Delet.+Insert.& (0.5, 0.5) & $31.27 \pm 4.58$ & $27.65 \pm 6.01$ & $29.32 \pm 4.17$	& $\underline{75.38 \pm 5.41}$ & $\mathbf{87.32 \pm 1.11}$ & $73.57 \pm 5.92$\\

\end{tabular}

\end{center}
\end{table*}

In this set of experiments, we observe to what extent the conclusions drawn in our theoretical analysis carry over to deeper GCN architectures, and how our proposed model performs against similar methods in the deeper architecture setting. We add three extra message passing layer to the previous single-layer model and repeat our experiments on this deeper model as well as two state-of-the-art methods, the Jumping Knowledge (JK) and the GCN with residual connections and an identity mapping (GCNII), which are specifically designed for deeper models and take also advantage of the node feature information. Part of the results have already been shown in Section 4.1 of the main paper. In Table~\ref{tab:multi-layer_app} we show the results of the remaining four datasets, which agree  with the trends observed in the main paper. 

\subsection{Node Feature Noise}
We now provide further experiment results in Figure~\ref{fig:feature-noise} studying how node feature noise interacts with our proposed model. We observe that only when the node feature noise is five times larger than the graph structure noise, it starts to overshadow the benefit of adding the node feature kernel, as we can see from the pink curves, with-kernel (dashed or dotted line) performs worse than without-kernel (solid line).
Otherwise, the performance of our kernel is robust and matches the observed trend repeatedly demonstrated in the previous experiments. 

\begin{figure}[t!]
    \begin{center}
    \includegraphics[width=\linewidth]{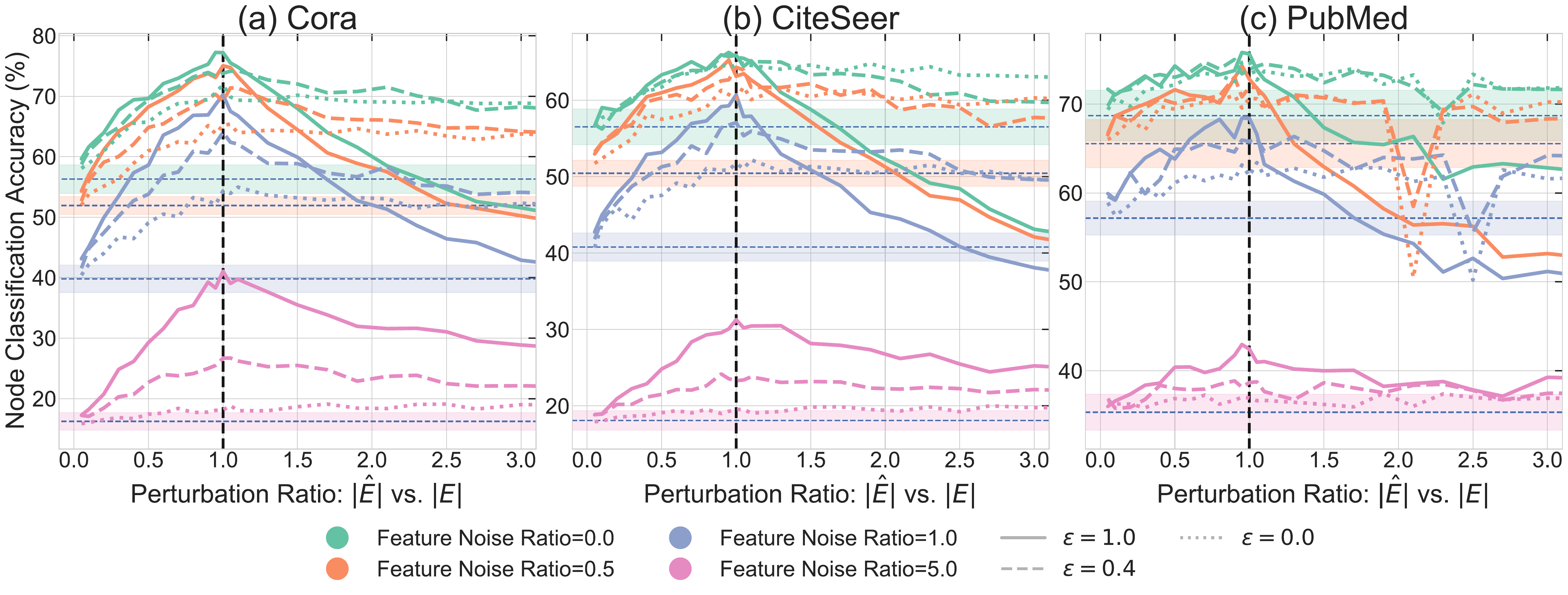}
    \caption{Experiment Results with the co-appearance of graph structural noise and node feature noise on three citation datasets. The vertical line at a rate of 1 represents the performance on the unperturbed graph. On its left is the \emph{edge deletion} case, with rate less than $1,$ where the most perturbed case corresponds to rate $0$; on the right is the \emph{edge insertion} case, where the perturbations grow with the rate. Different colours represent the extent of node feature perturbation.} 
    \label{fig:feature-noise}
    \end{center}
    \vspace{-0.5cm}
\end{figure}

\end{document}